\DeclareMathOperator*{\argmax}{arg\,max}
\newcommand\numberthis{\addtocounter{equation}{1}\tag{\theequation}}
\newenvironment{talign}
 {\align}
 {\endalign}
\newenvironment{talign*}
 {\csname align*\endcsname}
 {\endalign}
\newtheorem{theorem}{Theorem}
\newtheorem{lemma}{Lemma}
\newtheorem{lem}{Lemma}[]
\Crefname{lem}{Lemma}{Lemmas}
\Crefname{lemenumi}{Lemma}{Lemmas}
    \setlist[enumerate,1]{
        label={\textit{(\arabic*)}},
        ref={\thelem.\arabic*}
    }%
\title{Sample Efficient Reinforcement Learning with Partial Dynamics Knowledge}
\author{
    Meshal Alharbi,
    Mardavij Roozbehani,
    Munther Dahleh
}
\begin{document}

\maketitle

\begin{abstract}
The problem of sample complexity of online reinforcement learning is often studied in the literature without taking into account any partial knowledge about the system dynamics that could potentially accelerate the learning process. In this paper, we study the sample complexity of online Q-learning methods when some prior knowledge about the dynamics is available or can be learned efficiently. We focus on systems that evolve according to an additive disturbance model of the form $S_{h+1} = f(S_h, A_h) + W_h$, where $f$ represents the underlying system dynamics, and $W_h$ are unknown disturbances independent of states and actions. In the setting of finite episodic Markov decision processes with $S$ states, $A$ actions, and episode length $H$, we present an optimistic Q-learning algorithm that achieves $\tilde{\mathcal{O}}(\textsc{Poly}(H)\sqrt{T})$ regret under perfect knowledge of $f$, where $T$ is the total number of interactions with the system. This is in contrast to the typical $\tilde{\mathcal{O}}(\textsc{Poly}(H)\sqrt{SAT})$ regret for existing Q-learning methods. Further, if only a noisy estimate $\hat{f}$ of $f$ is available, our method can learn an approximately optimal policy in a number of samples that is independent of the cardinalities of state and action spaces. The sub-optimality gap depends on the approximation error $\hat{f}-f$, as well as the Lipschitz constant of the corresponding optimal value function. Our approach does not require modeling of the transition probabilities and enjoys the same memory complexity as model-free methods.
\end{abstract}

\begin{table*}[t]
\centering
\small
\begin{NiceTabular}{c|l|lll}
    \toprule
    \textbf{Structural Knowledge}   & \textbf{Algorithm} & \textbf{Regret} & \textbf{Time} & \textbf{Space} \\
    \midrule
    \Block{4-1}{Without}            & UCBVI \cite{azar2017minimax}                      & $\tilde{\mathcal{O}}(\sqrt{H^2SAT}+H^4S^2A)$      & $\mathcal{O}(S^2AT)$ & $\mathcal{O}(S^2AH)$ \\
                                    & UBEV \cite{dann2017unifying}                      & $\tilde{\mathcal{O}}(\sqrt{H^3SAT}+H^2S^3A^2)$    & $\mathcal{O}(S^2AT)$ & $\mathcal{O}(S^2AH)$ \\
                                    & UCB-H \cite{jin2018q}                             & $\tilde{\mathcal{O}}(\sqrt{H^4SAT})$              & $\mathcal{O}(T)$     & $\mathcal{O}(SAH)$ \\
                                    & Q-EarlySettled-Advantage \cite{li2021breaking}    & $\tilde{\mathcal{O}}(\sqrt{H^2SAT} + H^6SA)$      & $\mathcal{O}(T)$     & $\mathcal{O}(SAH)$ \\
    \midrule
    \small With                     & UCB-f (\textbf{this work})                        & $\tilde{\mathcal{O}}(\sqrt{H^6T})$                & $\mathcal{O}(SAT)$   & $\mathcal{O}(SAH)$ \\
    \bottomrule
\end{NiceTabular}
\caption{Summary of regret, time, and space complexities of Q-learning algorithms on finite episodic MDPs.}
\label{table:complexity}
\end{table*}

\section{Introduction}

\subsubsection{Motivations.} Reinforcement learning (RL) algorithms have achieved tremendous success in various domains, and the literature has seen an influx of methods with provable non-asymptotic guarantees. Many of these results are problem-agnostic and the theoretical guarantees are limited by the worst-case scenarios. At the same time, one would hope that the sample efficiency of RL could be tied to the inherent measures of complexity of the problem's dynamics, with ways to accelerate learning if some prior knowledge about the problem is available. Indeed, the following question in its full generality is still open:
\begin{center}
\textit{
How can we improve the sample efficiency of RL methods when some structure about the underlying dynamics is known or can be learned efficiently?
}
\end{center}
Practical methods that aim to address the previous question should carefully balance the sample complexity (number of interactions with the system) and computations (time and memory) spent in planning or improving policies. Our primary objective is to incorporate prior structural information without extensive offline computation or access to strong computational oracles.

\subsubsection{Systems of Interest.} 
Learning complexity in RL is multifaceted, and is influenced by uncertainty in both rewards and dynamics (see, e.g., \citet{lu2023reinforcement} for an introductory treatment on the subject). In the interest of tractability, herein, we focus our attention on a specific class of RL problems with uncertainty in the dynamics only. We consider a class $\mathcal{M}$ of Markov Decision Processes (MDPs) where the state transition is governed by:
\begin{equation}
    S_{h+1} = f(S_h, A_h) + W_h
\label{eq:system}
\end{equation}
where $f$, which may be partially or approximately known, represents the dynamics or fixed structure of the system, and $W_h$ are unknown disturbances independent of states and actions. Further, we assume the reward $r$ function is known; and thus, all the uncertainty is in the state transitions. Partial dynamics knowledge is modeled via the availability of an approximation $\hat{f}$  of $f$ that satisfies $\Vert \hat{f}-f\Vert_\infty \leq \zeta/2$. Note that this is different than \emph{sim-to-real} literature, where it is often assumed that one has a detailed probabilistic description of an approximate model \cite{jiang2018pac,feng2019does} or an approximate generative model where we can sample an arbitrary amount of trajectories without cost \cite{chen2021understanding}. See the next section for a detailed comparison. Furthermore, even if $f$ is fully known, one cannot apply dynamic programming here as the disturbances $W_h$ are not known even in distribution (in fact, this can be the main challenge in many practical problems). Finally, we shall see that restricting to linear disturbance models is beneficial for devising computationally efficient algorithms.

Many applications fit the setting described above. In operational tasks, like inventory control and demand response problems \cite{vazquez2019reinforcement}, the structural function $f$ is often known and the challenge lies in optimizing for the unknown non-stationary demand signals. In control tasks, where the dynamics rise from parametric physical models, one might only know $f$ approximately due to parametric uncertainty \cite{bhattacharyya2017robust, taylor2021towards}. For all of these problems, the assumption of known reward is not restrictive as the reward function is a design parameter.

\subsubsection{Contributions.} We study the previously stated research question under the finite episodic MDPs setting. Our main findings can be summarized as follows:
\begin{itemize}
    \item Under the perfect knowledge of $f$, we propose an optimistic Q-learning method that achieves $\tilde{\mathcal{O}}(\sqrt{H^6T})$\footnote{The $\tilde{\mathcal{O}}(\cdot)$ notation is analogous to the standard $\mathcal{O} (\cdot)$ notation while ignoring $\textsc{PolyLog}(S,A,H,K,1/p)$ dependencies.} regret with high probability without requiring any special assumptions on $f$ or the distribution of the unknown disturbances $W_h$.
    \item If only a noisy estimate $\hat{f}$ of $f$ with $\Vert \hat{f} - f \Vert_\infty \leq \zeta/2$ is available, we show that our algorithm is able to learn (with high probability) an $\mathcal{O}(L\zeta H^2)-$optimal policy in a number of samples that is independent of the cardinalities of state and action spaces. Here, $L$ is the Lipschitz constant of the optimal value function, which characterizes its smoothness.
    \item We further show that if an asymptotically accurate online estimator that can generate a sequence of functions $\{\hat{f}_i\}_{i=1}^K$ with $\Vert \hat{f}_i - f \Vert_\infty \leq \mathcal{O}(\sqrt{d/i})$ exists, then the regret of using such estimators with our algorithm is $\tilde{\mathcal{O}}(\sqrt{H^6T} + L\sqrt{HdT})$.
\end{itemize}

To put these results in perspective, we compare our method with different problem-agnostic Q-learning algorithms in Table~\ref{table:complexity}. Our algorithm effectively incorporates the structure in a way that trades off regret and computation without requiring the $S^2$ space and time dependency of model-based methods.

\section{Related Literature}
The RL literature spans decades and is extensive. This section focuses on a concise review of the most relevant works to our study.

\subsubsection{RL without Simulator.} In the finite episodic setting without access to a simulator, the best regret bound $\tilde{\mathcal{O}}(\sqrt{H^2SAT})$ is achieved by both model-based \cite{azar2017minimax} and and model-free \cite{li2021breaking} Q-learning algorithms. These results are known to be minimax-optimal, and are thus, constrained by the worst-case scenarios. See \citet{domingues2021episodic} for the lower bound proofs and the construction of worst-case MDPs. Problem-dependent bounds characterize sample complexity by additional metrics, such as the maximum per-step conditional variance \cite{zanette2019tighter}, minimum sub-optimality gap \cite{yang2021q}, or the gap-visitation complexity \cite{wagenmaker2022beyond}, and show sharper bounds when these metrics are small. Still, these bounds often show the same polynomial scaling in the cardinalities of states and actions as the problem-free bounds.

\subsubsection{RL with Approximate Model/Simulator.} The use of an approximate model or an approximate simulator in the training of RL methods, sometimes called \emph{sim-to-real transfer}, was shown to effectively improve the sample complexity both empirically \cite{bousmalis2018using, zhao2020sim} and theoretically \cite{chen2021understanding}. Our work is perhaps closest to this line of work in the literature. In \cite{jiang2018pac}, the author assumes access to a probabilistic description of an approximate transition kernel, and, under some assumptions, they characterize the sample complexity by the number of incorrect state-action pairs in the approximate kernel. In an alternative approach, the work of \cite{ayoub2020model} assumes that the actual transition kernel belongs to a known family of models and provides an algorithm with regret that scales with the Eluder dimension for this family. Both works show interesting results where the sample complexity does not scale with the number of states and actions. On the other hand, the algorithms are computationally intensive, requiring a dynamic programming (DP) solver in each iteration. Our work differs from this literature as we do not require an oracle DP solver, nor do we require a detailed probabilistic description of the approximate model.

\subsubsection{RL with System Identification.} Rather than depending on a prior approximate model, other studies in the literature alternate between learning the model (i.e., system identification) and planning with RL \cite{nagabandi2018neural, zhang2019solar, schrittwieser2021online}. In terms of theoretical analysis, the most prominent examples are for the Linear-quadratic regulators (LQRs), where the sample complexity of the planning under an unknown model has been thoroughly quantified by the intrinsic measures of the difficulty of learning linear systems \cite{dean2018regret,cassel2020logarithmic}. Algorithms for LQRs heavily rely on structural properties (e.g., the fact that the optimal policy is linear and the optimal value function is quadratic) in their design and cannot be immediately generalized to a broader class of problems.

\begin{algorithm*}[t]
\caption{ $\big(\text{UCB-f}\big)$ Optimistic Q-learning with $\hat{f}$}
\label{alg:UCB_f}
\begin{algorithmic}[1]
    \State Initialize $Q_h^1(s, a) \leftarrow H$ and $V_h^1(s) \leftarrow H$ for all $(s, a, h) \in \mathcal{S} \times \mathcal{A} \times [H]$
    \For{episode $k = 1, \dots, K$}
        \State Receive $s_1^k$
        \Comment{Initial state}
        \For{step $h = 1, \dots, H$}
            \State $a_h^k \leftarrow \argmax_{a'} Q_h^k(s_h^k, a')$
            \Comment{Greedy action with respect to $Q_h^k$}
            \State Take action $a_h^k$ and observe next state $s_{h+1}^k$
            \State $\hat{w}_h^k \leftarrow s_{h+1}^k - \hat{f}(s_h^k, a_h^k)$ \label{line_1}
            \Comment{Transition randomness}
            \ForAll{$s \in \mathcal{S}$}
                \ForAll{$a \in \mathcal{A}$}
                    \State $s' \leftarrow \hat{f}(s,a) + \hat{w}_h^k$ \label{line_2}
                    \Comment{Simulated next state}
                    \State $Q_h^{k+1}(s,a) \leftarrow (1-\alpha_k)Q_h^k(s,a) + \alpha_k \left[r_h(s,a) + V_{h+1}^k(s') + b_k\right]$
                    \Comment{Q-learning update}
                \EndFor
                \State $V_h^{k+1}(s) \leftarrow \min\left\{ H,\ \max Q_h^{k+1}(s, \cdot) 
                    \right\}$
                    \Comment{Value function update}
            \EndFor
        \EndFor
    \EndFor
\end{algorithmic}
\end{algorithm*}

\section{Preliminaries}

We consider an episodic MDP described by the tuple $(\mathcal{S}, \mathcal{A}, H, \mathbb{P}, \mu, r)$, where $\mathcal{S}=\{1,2,\dots, S\} \subseteq \mathbb{N}$ and $\mathcal{A}=\{1,2,\dots, A\}  \subseteq \mathbb{N}$ are the finite state and the action spaces, $H$ is the number of steps in each episode, $\mathbb{P}$ is the transition kernel such that $\mathbb{P}_h(\cdot | s,a )$ is the distribution over next states if action $a$ is taken at state $s$ and step $h$, $\mu$ is the distribution of initial states, and $r_h: \mathcal{S} \times \mathcal{A} \rightarrow [0,1]$ are the deterministic reward functions. For convenience, we define $[\mathbb{P}_h V](s,a) \coloneqq\mathbb{E}_{ s' \sim \mathbb{P}_h(\cdot | s,a )} [V(s')]$ for any $V: \mathcal{S} \rightarrow \mathbb{R}$.

We use $\pi$ to denote a stationary deterministic policy, which is a collection of $H$ functions $\{ \pi_h: \mathcal{S} \rightarrow \mathcal{A} \}_{h\in[H]}$\footnote{For any integer $n \in \mathbb{Z}^+$, we define $[n] \coloneqq \{1,2,\dots,n\}$.}. The value function $V_h^\pi : \mathcal{S} \rightarrow \mathbb{R}$ and action-value function $Q_h^\pi : \mathcal{S} \times \mathcal{A} \rightarrow \mathbb{R}$ of a policy $\pi$ are defined as:
{\fontsize{8pt}{10.8pt} \selectfont
\begin{equation}
    V_h^\pi(s) \coloneqq \mathbb{E} \left[ \sum_{t=h}^H r_t(s_t, \pi_t(s_t))\ \Big|\ s_h {=} s \right]
\end{equation}
\begin{equation}
    Q_h^\pi(s,a) \coloneqq  r_h(s,a) + \mathbb{E} \left[ \sum_{t=h+1}^H r_t(s_t, \pi_t(s_t))\ \Big|\ s_h {=} s, a_h {=} a \right]
\end{equation}
}
Further, we adopt the convention that $V_{H+1}^\pi(s) = 0$ for any policy. Since the state space, action space, and horizon are all finite and rewards are bounded, there always exists an optimal policy $\pi^\star$ that achieves the optimal value $V_h^\star(s) = \sup_\pi V_h^\pi(s) \ \text{for all } (s,h) \in \mathcal{S} \times [H]$ \cite{bertsekas2017dynamic}. For all $(s, a, h) \in \mathcal{S} \times \mathcal{A} \times [H]$, the Bellman equations for a policy $\pi$ and Bellman optimality equations are:
{\small
\begin{equation}
\begin{split}
    V_h^\pi(s) = Q_h^\pi(s, \pi_h(s)),\mkern9mu Q_h^\pi(s, a) = (r_h {+} \mathbb{P}_h V_{h+1}^\pi)(s,a) \\
    V_h^\star(s) = \max_{a' \in \mathcal{A}} Q_h^\star(s, a'),\mkern9mu Q_h^\star(s, a) = (r_h {+} \mathbb{P}_h V_{h+1}^\star)(s,a)
\end{split}
\label{eq:bellman}
\end{equation}
}

We say that a policy $\pi$ is $\epsilon$-optimal if $V_1^{\pi}(s) \geq V_1^\star(s) - \epsilon$ for all $s \in \mathcal{S}$. Suppose an agent interacts with the MDP for a total of $K$ episodes. The total number of steps is $T\coloneqq HK$. The agent selects a policy $\pi^k$ at the start of each episode $k \in [K]$, receives an initial state $s_1^k$ drawn from the initial state distribution $\mu$, and uses $\pi^k$ to generate all actions in episode $k$. We define the cumulative regret for the agent as:
\begin{equation}
    \text{Regret}(K) \coloneqq \sum_{k=1}^K \left[ V_1^\star(s_1^k) - V_1^{\pi^k}(s_1^k) \right]
\label{eq:regret}
\end{equation}

We restrict the transition kernel $\mathbb{P}$ such that the state transitions are governed by $S_{h+1} = f(S_h, A_h) + W_h$, where $f: \mathcal{S} \times \mathcal{A} \rightarrow \mathcal{S}$ is a deterministic function and $W_h$ are random variables with bounded support $\{0,1,\dots, W\}$ (i.e., $|W_h| \leq W$) generated independently of states and actions. An approximation $\hat{f}$ of $f$ is a function $\mathcal{S} \times \mathcal{A} \rightarrow \mathcal{S}$ that satisfies $\Vert\hat{f}-f\Vert_\infty = \sup_{(s,a)} |\hat{f}(s,a) - f(s,a)| \leq \zeta/2$.

\section{Main Results}
In this section, we present our main algorithm and its formal sample complexity guarantees. As we will see shortly, our algorithm is able to achieve $\sqrt{T}$ regret without dependency in $S$ and $A$ when $f$ is known ($\zeta = 0$). In the presence of persistent approximation errors, the algorithm accumulates additional regret that is upper-bounded by a term of the form $\mathcal{O}(L\zeta H T)$. In the case that approximation errors are nonzero but asymptotically diminishing, the linear term can be suppressed and $\sqrt{T}$ scaling of regret is restored.

\subsection{Algorithm}
We begin by providing the intuition behind our algorithm. The state transitions in Equation~\ref{eq:system} implies that for all $s', s_1, s_2 \in \mathcal{S}, a_1, a_2 \in \mathcal{A},$ and $h \in [H]$, we have:
{
\fontsize{9pt}{10.8pt} \selectfont
\begin{align}
    \mathbb{P}_h(s' | s_1, a_1 ) = \mathbb{P}_h(s'| s_2, a_2) \enspace \textit{if} \enspace f(s_1, a_1) = f(s_2, a_2)
\label{eq:transition_structure}
\end{align}
}
Thus, if one observes a transition from $(s_h,a_h)$ to $(s_{h+1})$, the structure of the transition probabilities suggests that the Q-learning update can be extended to all $(s,a) \in \mathcal{S} \times \mathcal{A}$ pairs according to:
\begin{align}
\begin{split}
    \hat{Q}_h(s,a) \leftarrow & (1-\alpha)\hat{Q}_h(s,a)\ +\\ &\alpha \left[ r_h(s,a) + \hat{V}_{h+1}\big(f(s,a) + w_h\big) \right]
\end{split}
\end{align}
where $w_h = s_{h+1} - f(s_h, a_h)$ and $\alpha$ is a learning rate. One interpretation of the above scheme is that $f$ is used to simulate transitions for state-action pairs not vised in the real trajectories. We contrast this with the typical asynchronous Q-learning that \emph{only} update $(s_h,a_h)$:
{\fontsize{8pt}{10.8pt} \selectfont
\begin{equation}
    \hat{Q}_h(s,a) \leftarrow
    \left\{
    \begin{aligned}
       & (1-\alpha)\hat{Q}_h(s,a)\ + \\
       & \alpha \left[ r_h(s,a) {+} \hat{V}_{h+1}(s_{h+1}) \right], & \textit{if}\ (s{,}a) {=} (s_h{,}a_h)
    \\[1ex]
       & \hat{Q}_h(s,a),                                            & \textit{otherwise}
    \end{aligned}
    \right.
\end{equation}
}

In Algorithm~\ref{alg:UCB_f}, we present an optimistic implementation of the previous scheme, where bonuses are added to the Q-update to incentivize exploration. The exact bonus values will be given in the formal results as they depend on the context. Moreover, there is a specific choice for the learning rate, and it should be set to $\alpha_t = \frac{H+1}{H+t}$. We will discuss the importance of this choice in the next section as well. We also note that all the value functions in Algorithm~\ref{alg:UCB_f} can be updated in place, and the dependency on $k$ (i.e, $Q^{k+1} \shortleftarrow Q^k$) is only for clarity of the analysis.

While the updates carried by Algorithm~\ref{alg:UCB_f} might at first seem similar to synchronous Q-learning \cite{li2021tightening}, there are fundamental differences. In synchronous Q-learning, the algorithm would receive independent samples for every state-action pair, and no cost (i.e., regret) is incurred during this process. On the other hand, our algorithm only receives samples from the trajectory of its greedy policy and accumulates regret while it explores. Finally, our algorithm can be implemented in a model-based fashion, with explicit parameterization of transition probabilities. This could enable more complex transition structures than the one in Equation~\ref{eq:transition_structure} to be encoded in the Q-updates, but it is unnecessary in our case, so we choose the model-free implementation for its computational advantages.

\subsection{Sample Complexity}

In general, we need to assume some regularities in the value functions for learning with a noisy estimator to be feasible. In our analysis, we assume Lipschitz continuity. That is, for all $s_1, s_2 \in \mathcal{S}$ and $h \in [H]$, we have\footnote{ Note that $L=H$ holds trivially for any MDP in our setting; nonetheless, one would hope that $L \ll \mathcal{O}(H)$. An upper bound on $L$ can be derived from Lipschitz constants of $\hat{f}$ and $r$ without access to the optimal value functions \cite{hinderer2005lipschitz}.}:
\begin{equation}
    | V_h^\star(s_1) - V_h^\star(s_2) | \leq L | s_1 - s_2 |.
\label{eq:lipschitz_assumption}
\end{equation}
In the following we assume that Algorithm~\ref{alg:UCB_f} is run with a noisy estimate $\hat{f}$ that satisfies:
\begin{equation}
    \Vert \hat{f} - f \Vert_\infty \leq \zeta/2
\label{eq:noisy_model}
\end{equation}
where $\zeta$ is known. With these, we are ready to state our formal regret guarantee.

\begin{theorem}
For any $p \in (0,1)$, let the bonus term applied for all Q-learning updates during the $k$-th episode be $b_k= c \sqrt{\frac{H^3\log(SAH/p)}{k}} + L\zeta$. Then with probability $1{-}p$, the total regret of Algorithm~\ref{alg:UCB_f} when applied to the class of MDPs in Equation~\ref{eq:system}, is at most $\mathcal{O}(\sqrt{H^6T\iota} + L\zeta HT)$, where $c>0$ is an absolute constant and $\iota = \log(SAH/p)\log{K}$.
\label{thm:regret}
\end{theorem}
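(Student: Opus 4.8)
The plan is to follow the optimistic-Q-learning template of \citet{jin2018q} while exploiting the key structural feature of Algorithm~\ref{alg:UCB_f}: because the update in Line~\ref{line_2} is applied to \emph{every} pair $(s,a)\in\mathcal{S}\times\mathcal{A}$ at each step, after $k$ episodes of updates every triple $(s,a,h)$ has been touched exactly $k$ times (equivalently, the count entering episode $k$ is $k-1$ for all triples). The data-dependent visitation counts $N_h^k(s,a)$ of the standard analysis are thus replaced by the deterministic count $k$, and this uniformity is precisely what removes the $\sqrt{SA}$ factor. The second ingredient relates the simulated next state $\hat{s}'_i=\hat{f}(s,a)+w_h^i$ used in the update to a genuine sample of the true dynamics. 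Writing $w_h^i=s_{h+1}^i-\hat f(s_h^i,a_h^i)$ and $W_h^i=s_{h+1}^i-f(s_h^i,a_h^i)$, the counterfactual ``true'' state $\tilde s'_i\coloneqq f(s,a)+W_h^i$ is an i.i.d. draw from $\mathbb{P}_h(\cdot\mid s,a)$ (the disturbances being independent of states and actions), while $|\hat s'_i-\tilde s'_i|\le 2\Vert\hat f-f\Vert_\infty\le\zeta$. By the Lipschitz assumption~\eqref{eq:lipschitz_assumption}, evaluating $V^\star$ at $\hat s'_i$ rather than $\tilde s'_i$ costs at most $L\zeta$, which is exactly the constant offset carried in $b_k$.

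With these observations I would first record the standard properties of the weights $\alpha_t^i$ induced by $\alpha_t=\frac{H+1}{H+t}$ (in particular $\sum_{i\le t}\alpha_t^i=1$, $\alpha_t^0=0$ for $t\ge1$, $\sum_i(\alpha_t^i)^2\le 2H/t$, $\sum_i\alpha_t^i/\sqrt i=\mathcal{O}(1/\sqrt t)$, and $\sum_{t\ge i}\alpha_t^i\le 1+\tfrac1H$) and unroll the update into $Q_h^{k+1}(s,a)=\sum_{i=1}^{k}\alpha_k^i[r_h(s,a)+V_{h+1}^i(\hat s'_i)+b_i]$. The first goal is \textbf{optimism}. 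By backward induction on $h$ and the identity $Q_h^\star=r_h+\mathbb{P}_hV_{h+1}^\star$, the difference $Q_h^{k+1}-Q_h^\star$ splits into (i) $\sum_i\alpha_k^i(V_{h+1}^i-V_{h+1}^\star)(\hat s'_i)\ge0$ by the inductive hypothesis (note this uses only pointwise positivity, not Lipschitzness of the iterates), (ii) a Lipschitz term $\ge-L\zeta$, and (iii) the martingale $\sum_i\alpha_k^i(V_{h+1}^\star(\tilde s'_i)-\mathbb{P}_hV_{h+1}^\star(s,a))$. Using $\sum_i(\alpha_k^i)^2\le 2H/k$, an Azuma--Hoeffding bound controls (iii) by $\mathcal{O}(\sqrt{H^3\iota_0/k})$ with $\iota_0=\log(SAH/p)$; choosing the constant $c$ large enough that the $c\sqrt{H^3\iota_0/k}$ part of $b_k$ dominates (iii) while the $L\zeta$ part cancels (ii) yields $Q_h^k\ge Q_h^\star$ for all $(s,a,h,k)$ simultaneously after a union bound, which is the source of the $\log(SAH/p)$ dependence.

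For the \textbf{regret}, optimism gives $\mathrm{Regret}(K)\le\sum_k(V_1^k-V_1^{\pi^k})(s_1^k)$. Since $\pi^k$ is greedy, I would telescope this along the realized trajectory of episode $k$ into $\sum_h\mathrm{BE}_h^k(s_h^k,a_h^k)$ plus a bounded martingale (the expectation-minus-realization of $V_{h+1}^k-V_{h+1}^{\pi^k}$ across the on-policy transition), where $\mathrm{BE}_h^k=Q_h^k-r_h-\mathbb{P}_hV_{h+1}^k$. Expanding $\mathrm{BE}_h^k$ through the unrolled update and inserting $V^\star$ as above, the residual is at most a bonus/concentration part $\mathcal{O}(\sqrt{H^3\iota_0/k})$, an approximation part $\mathcal{O}(L\zeta)$, a nonpositive term $-\mathbb{P}_hU_{h+1}^k$ that I drop, and the key \emph{counterfactual} term $\sum_i\alpha_{k-1}^i\,U_{h+1}^i(\hat s'_i)$, where $U_h^i\coloneqq V_h^i-V_h^\star\ge0$ is the pointwise optimism gap.

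The main obstacle is this last term. In standard Q-learning the analogue $U_{h+1}^i(s_{h+1}^i)$ is evaluated at the on-trajectory successor of episode $i$, so interchanging $\sum_k\sum_i$ telescopes cleanly into $(1+\tfrac1H)\sum_kU_{h+1}^k$ and yields the usual one-step recursion. Here $\hat s'_i=\hat f(s_h^k,a_h^k)+w_h^i$ is a \emph{counterfactual} state depending on both the outer index $k$ and the inner index $i$, so this telescoping fails. I am therefore forced to bound $U_{h+1}^i(\hat s'_i)\le\Vert U_{h+1}^i\Vert_\infty=:G_{h+1}^i$ and to control $\sum_iG_{h+1}^i$ through a \emph{separate} sup-norm recursion, obtained by repeating the residual expansion with a maximum over $(s,a)$: $\sum_kG_h^k\le(1+\tfrac1H)\sum_kG_{h+1}^k+\mathcal{O}(\sqrt{H^3\iota_0K}+L\zeta K)$, which unrolls over the $H$ levels to $\sum_kG_1^k=\mathcal{O}(\sqrt{H^5\iota_0K}+L\zeta HK)$. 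Feeding this back into the trajectory bound accumulates a \emph{second} factor of $H$, so the dominant terms become $\mathcal{O}(H^2\sqrt{H^3\iota_0K})=\mathcal{O}(\sqrt{H^6T\iota})$ and $\mathcal{O}(L\zeta H^2K)=\mathcal{O}(L\zeta HT)$ via $T=HK$, while the residual martingales are lower order $\tilde{\mathcal{O}}(\sqrt{H^2T})$. Thus the double accumulation over $H$ forced by the counterfactual simulated states is exactly what turns the naive $\sqrt{H^4T}$ into the stated $\sqrt{H^6T}$; I expect the delicate points to be arguing that this sup-norm detour is essentially unavoidable and tracking the extra $\log K$ that arises from union-bounding the concentration inequalities across all $k\le K$.
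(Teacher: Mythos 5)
Your proposal is correct and reaches the stated bound, but the regret half of your argument takes a genuinely different route from the paper's. The optimism step is essentially identical to Lemma~\ref{lem:optimism}: the same unrolling of the update, the same split into an inductively nonnegative term, a Lipschitz term of size $L\zeta$, and an Azuma--Hoeffding martingale of size $\mathcal{O}(\sqrt{H^3\iota/t})$ controlled via $\sum_i(\alpha_t^i)^2\le 2H/t$, with a union bound over $(s,a,h)$. Where you diverge is in how errors are accumulated over $h$ and $k$. The paper never performs your on-trajectory Bellman-error telescoping: instead, Lemma~\ref{lem:convergence} establishes a \emph{per-episode}, uniform-over-states bound $(Q_h^k-Q_h^\star)(s,\pi_h^k(s))\le P_h^{k-1}/\sqrt{k-1}$ by backward induction on $h$ \emph{before} summing over $K$, using Lemma~\ref{lem:alpha_a} ($\sum_i\alpha_t^i/\sqrt{i}\le(1+\tfrac1H)/\sqrt{t}$); the suboptimality gap then follows by a recursion in expectation, with no on-policy martingale needed, and regret follows by summing and Jensen. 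You instead keep the template of \citet{jin2018q} --- telescope along the realized trajectory, then interchange $\sum_k\sum_i$ using $\sum_{t\ge i}\alpha_t^i=1+\tfrac1H$ (Lemma~\ref{lem:alpha_c}) --- and you correctly identify why this fails verbatim here (the counterfactual states $\hat s_i'$ break the telescoping) and repair it by passing to the sup-norms $G_h^k=\Vert V_h^k-V_h^\star\Vert_\infty$, which is legitimate precisely because of the same uniformity-of-updates property the paper exploits, just applied after the sum over $K$ rather than before. Both routes accumulate the same two extra factors of $H$ and land on $\mathcal{O}(\sqrt{H^6T\iota}+L\zeta HT)$. What each buys: the paper's per-episode bound immediately yields the best-policy-identification guarantee of Theorem~\ref{thm:bpi}, which your aggregate recursion on $\sum_k G_h^k$ cannot deliver; your route avoids the paper's lossy Jensen step (the source of its $\log K$ factor) at the cost of an extra, lower-order on-policy martingale term $\tilde{\mathcal{O}}(\sqrt{H^2T})$, and your attribution of a residual $\log K$ to the union bound over $k\le K$ in the concentration step is in fact more careful than the paper's accounting. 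Two details to tighten if you flesh this out: treat episode $k=1$ and the initialization contribution $G_h^1\le H$ separately (both lower order), and note that dropping $-\mathbb{P}_hU_{h+1}^k\le 0$ requires optimism, so the sup-norm recursion must be stated on the same high-probability event as Lemma~\ref{lem:optimism}.
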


A few comments are in order. The theorem implies that when $\zeta=0$, we are able to achieve $\sqrt{T}$ regret with only a logarithmic dependency on the cardinalities of states and actions. When $\zeta > 0$, a linear $\zeta T$ term in the regret is unavoidable due to the persisting bias from using $\hat{f}$ instead of the true function $f$ (see, e.g., \citet{jin2020provably} for similar results under model mismatch). A key observation is that our algorithm does not amplify this term by any factor that depends on the number of states and actions. Besides regret, our algorithm also enjoys direct guarantees on its policies.

\begin{theorem}
Under the same conditions of Theorem~\ref{thm:regret}, Algorithm~\ref{alg:UCB_f} will generate a policy that is $(\epsilon + \mathcal{O}(L\zeta H^2))$- optimal with probability $1-p$, when the number of episodes $K$ is at least $\mathcal{O}\left( \frac{H^7 \iota}{\epsilon^2} \right)$, where $\iota = \log(SAH/p)$.
\label{thm:bpi}
\end{theorem}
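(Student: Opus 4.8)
The plan is to obtain the PAC guarantee by an online-to-batch conversion of the regret bound in Theorem~\ref{thm:regret}. First I would substitute $T = HK$ into that bound so that, on the probability-$(1-p)$ event where it holds,
\[
\sum_{k=1}^K \big[ V_1^\star(s_1^k) - V_1^{\pi^k}(s_1^k) \big] \le \mathcal{O}\!\big( \sqrt{H^7 K \iota} + L\zeta H^2 K \big),
\]
with $\iota = \log(SAH/p)\log K$. Dividing by $K$ and using that every summand is nonnegative (since $\pi^\star$ is optimal) shows that the average per-episode suboptimality is at most $\mathcal{O}(\sqrt{H^7\iota/K}) + \mathcal{O}(L\zeta H^2)$.

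Next I would extract a single policy. The natural output is $\hat\pi$ drawn uniformly at random from $\{\pi^1,\dots,\pi^K\}$ (or, since the gaps are nonnegative, the episode attaining the minimum gap). Because the average upper-bounds the minimum, at least one $\pi^k$ has realized gap no larger than the displayed average, and the uniformly-random choice has expected gap exactly equal to it. To turn the realized-state bound into the stated $\epsilon$-optimality with respect to the initial distribution, I would note that $s_1^k\sim\mu$ i.i.d.\ and $\pi^k$ is measurable with respect to the first $k-1$ episodes, so $\mathbb{E}[V_1^\star(s_1^k)-V_1^{\pi^k}(s_1^k)\mid \mathcal{F}_{k-1}]$ equals the population suboptimality $\mathbb{E}_{s\sim\mu}[V_1^\star(s)-V_1^{\pi^k}(s)]$; an Azuma–Hoeffding bound on this martingale difference sequence (each term bounded by $H$) controls the deviation between the realized sum and the population sum at the cost of an additive $\mathcal{O}(\sqrt{H^2 K\iota})$ term, which is dominated by the $\sqrt{H^7 K\iota}$ already present.

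Finally I would calibrate $K$. Requiring the statistical term $\sqrt{H^7\iota/K}$ to be at most $\epsilon$ gives $K \ge \mathcal{O}(H^7\iota/\epsilon^2)$; absorbing the residual $\log K$ into the $\mathcal{O}(\cdot)$ (equivalently replacing $\iota$ by $\log(SAH/p)$) yields the claimed sample size, while the irreducible bias term contributes the $\mathcal{O}(L\zeta H^2)$ offset. The resulting policy is then $(\epsilon + \mathcal{O}(L\zeta H^2))$-optimal on the same $(1-p)$ event.

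I expect the main obstacle to be the bookkeeping in the second step: cleanly converting the regret, which is stated in terms of the realized initial states $s_1^k$ and the evolving sequence of policies, into a guarantee for one fixed output policy. The subtle points are (i) justifying that a uniform-random (or best-of-$K$) selection inherits the average gap, and (ii) bridging realized-state and population suboptimality via concentration without inflating the dominant $H$-dependence. The calibration of $K$ and the handling of the $\log K$ factor are routine by comparison.
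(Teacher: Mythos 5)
Your online-to-batch conversion is precisely the route the paper goes out of its way to avoid, and it has a genuine gap: the policy-selection step cannot deliver the stated probability scaling. The ``episode attaining the minimum gap'' is not observable --- the learner knows neither $V_1^\star$ nor $V_1^{\pi^k}$, so it cannot identify which $\pi^k$ realizes the minimum. The implementable alternative, drawing $\hat\pi$ uniformly from $\{\pi^1,\dots,\pi^K\}$, only bounds the \emph{expected} gap of $\hat\pi$ by the average; converting that into a guarantee holding ``with probability $1-p$'' requires Markov's inequality over the selection randomness, which inflates the gap by a $1/p$ factor (equivalently forces $K \gtrsim H^7\iota/(\epsilon p)^2$). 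Your closing claim that the selected policy is good ``on the same $(1-p)$ event'' conflates the event on which the regret bound holds with the additional randomness of the selection. This is exactly the ``heuristic translation'' of regret to best-policy identification that the paper's discussion of Theorem~\ref{thm:bpi} explicitly disclaims, noting it yields suboptimal $1/p^2$ dependence and only constant-probability guarantees, whereas the theorem claims sample complexity with $\iota = \log(SAH/p)$. A secondary mismatch: the paper defines $\epsilon$-optimality as $V_1^{\pi}(s) \geq V_1^\star(s) - \epsilon$ for \emph{all} $s\in\mathcal{S}$, while your realized-state-to-population bridge can at best give a guarantee in expectation over $s_1 \sim \mu$; if $\mu$ is not full-support, no concentration argument recovers the all-states statement.

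The paper's own proof bypasses both problems by never converting regret at all. Its Lemma~\ref{lem:convergence} holds uniformly over all states $s$ and all episodes $k$ on a single $(1-p)$ event, and the recursion in Equation~\ref{eq:equation_3} then gives a \emph{per-episode, per-state} bound $(V_1^\star - V_1^{\pi^k})(s) \leq \mathcal{O}\bigl(\sqrt{H^7\iota/(k-1)} + L\zeta H^2\bigr)$. Hence any fixed policy $\pi^k$ with $k-1 \geq H^7\iota/\epsilon^2$ --- in particular the last one --- is $(\epsilon + \mathcal{O}(L\zeta H^2))$-optimal in the for-all-states sense on that same event, with no policy selection, no Markov step, and no $\log K$ factor (the $\log K$ in the regret bound arises only from summing over episodes, which the direct argument never does; your suggestion to ``absorb'' that factor is also not quite licit, since $\log K$ grows with the very quantity being calibrated). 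In short, the regret bound and the PAC bound are siblings derived from the same per-episode inequality, not parent and child; deriving the latter from the former loses exactly the $\log(1/p)$ dependence and the uniform-in-$s$ guarantee that the theorem asserts.
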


Similar to Theorem~\ref{thm:regret}, the policy converges to the optimal policy when $\zeta=0$, and to a sub-optimal policy when $\zeta>0$. Further, we believe that the dependency on $H$ is not sharp and can be improved (see, the Discussion section). It is worth noting that Theorem~\ref{thm:bpi} does not rely on heuristic translations of regret to best policy identification. These translations typically have suboptimal $1/{p^2}$ dependency and only hold with constant probability \cite{menard2021fast}. Whereas, Theorem~\ref{thm:bpi} has the optimal $\log(1/{p})$ scaling and holds with high probability. 

Finally, we state the results for the case where the estimates $\hat{f}$ can be improved over time in an online fashion.

\begin{theorem}
Suppose that there exists an online estimator that generates a sequence of functions $\{\hat{f}_i\}_{i=1}^K$ such that $\Vert \hat{f}_i - f \Vert_\infty \leq \mathcal{O}(\sqrt{d/i})$ holds uniformly with high probability, then the regret of using such estimators with Algorithm~\ref{alg:UCB_f} is at most $\tilde{\mathcal{O}}(\sqrt{H^6T} + L\sqrt{HdT})$.
\label{thm:online_estimator}
\end{theorem}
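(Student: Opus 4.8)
The plan is to treat Theorem~\ref{thm:online_estimator} as an instance of the analysis behind Theorem~\ref{thm:regret}, but with a per-episode, data-dependent mismatch bound in place of the fixed $\zeta$. Concretely, I would set $\zeta_i \coloneqq 2c'\sqrt{d/i}$ (so that the hypothesis $\Vert\hat f_i - f\Vert_\infty \le \zeta_i/2$ holds at episode $i$) and run Algorithm~\ref{alg:UCB_f} with the episode-indexed bonus $b_i = c\sqrt{H^3\iota/i} + L\zeta_i$. The first step is to isolate the good event of the estimator: since the hypothesis guarantees $\Vert\hat f_i - f\Vert_\infty \le \mathcal{O}(\sqrt{d/i})$ \emph{uniformly} in $i$ with high probability, I condition on this event (probability at least $1-p/2$), which makes every $\zeta_i$ a valid and known mismatch bound simultaneously, and union-bound it against the Q-learning concentration event (probability at least $1-p/2$) used in Theorem~\ref{thm:regret}.

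The second step is to observe that the exploration part of the argument is untouched by the time-varying estimator. The disturbance samples extracted on line~\ref{line_1}, namely the true increments $s_{h+1}^i - f(s_h^i,a_h^i)$, remain i.i.d.\ regardless of which $\hat f_i$ is used, so the martingale concentration controlling $\sum_i \alpha_k^i\, V^\star_{h+1}(\tilde s'_i)$ around $[\mathbb{P}_h V^\star_{h+1}]$ is identical to the $\zeta=0$ case and contributes the same $\tilde{\mathcal{O}}(\sqrt{H^6 T})$ term. The estimator enters \emph{only} through the simulation bias $|s'_i - \tilde s'_i| \le \zeta_i$ on line~\ref{line_2}, which by the Lipschitz assumption~\eqref{eq:lipschitz_assumption} translates into a value error of at most $L\zeta_i$ per episode-$i$ update.

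The third and central step is the bias bookkeeping. Optimism is preserved \emph{term by term}: unrolling the Q-update, the accumulated bias in $Q_h^{k}$ along the trajectory is the learning-rate-weighted sum $\sum_{i}\alpha_{k}^{i}\,L\zeta_i$, while the accumulated bonus contains $\sum_i \alpha_k^i\, L\zeta_i$, so the inequality $b_i \ge L\zeta_i$ at each episode makes the bonus dominate the bias summand-by-summand and $Q_h^k \ge Q_h^\star$ survives the induction exactly as in Theorem~\ref{thm:regret}. For the regret, the only change is that the constant $L\zeta$ that produced the linear term $L\zeta HT$ is replaced by the decaying sequence $L\zeta_i$. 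Because the simulation error is a \emph{deterministic} per-step perturbation bounded by $L\zeta_i$, it propagates through the value recursion with only the additive horizon factor $\mathcal{O}(H)$ and does not incur the variance-type horizon amplification that inflates the exploration term to $\sqrt{H^6T}$. Summing over episodes and using $\sum_{k=1}^K k^{-1/2}\le 2\sqrt{K}$ together with $T=HK$,
\[
\sum_{k=1}^{K} \mathcal{O}(H)\cdot L\,\mathcal{O}\!\left(\sqrt{d/k}\right) \;=\; \mathcal{O}\!\left(LH\sqrt{d}\,\sqrt{K}\right) \;=\; \mathcal{O}\!\left(L\sqrt{HdT}\right),
\]
which is the second term of the claim; adding the unchanged exploration term yields $\tilde{\mathcal{O}}(\sqrt{H^6 T} + L\sqrt{HdT})$.

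The hard part, I expect, is the sharp control of the weighted bias sum $\sum_i\alpha_k^i\zeta_i$. A crude bound that replaces every $\zeta_i$ by the worst (earliest, largest) value would reinstate a factor that destroys the $\sqrt{T}$ rate, so the argument must exploit that the weights induced by $\alpha_t=(H+1)/(H+t)$ concentrate their mass on \emph{recent} episodes, giving $\sum_i \alpha_k^i\sqrt{d/i} = \mathcal{O}(\sqrt{d/k})$; this is precisely what ensures the accumulated bias tracks the \emph{current} error level $\zeta_k$ rather than the stale ones, and it is the step where the specific learning-rate schedule is essential. A secondary subtlety worth flagging is that in the fully online setting $\hat f_i$ is itself computed from the algorithm's own trajectory, so the i.i.d.-disturbance and uniform-error claims are only justified because the theorem \emph{assumes} the error bound holds uniformly with high probability, thereby decoupling the estimator's statistics from the exploration process; making that decoupling rigorous rather than assumed would be the natural next obstacle beyond the present statement.
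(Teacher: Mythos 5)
Your proposal follows essentially the same route as the paper: the paper's proof of Theorem~\ref{thm:online_estimator} consists precisely of replacing the constant mismatch term in the bonus by a decaying one, $b_t = c\sqrt{H^3\iota/t} + L\sqrt{d/t}$, and rerunning the $\zeta>0$ optimism/convergence/regret chain verbatim. You correctly isolate the step that makes this work: the learning-rate-weighted bias sum must track the \emph{current} error level, i.e.\ $\sum_{i=1}^t \alpha_t^i \sqrt{d/i} \leq \left(1+\tfrac{1}{H}\right)\sqrt{d/t}$, which is exactly Lemma~\ref{lem:alpha_a} with $a=\tfrac{1}{2}$ --- the same property already used for the concentration part of the bonus. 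Your summand-by-summand optimism argument and the union bound over the estimator's good event also match what Lemma~\ref{lem:optimism} does implicitly.

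The one place where you assert rather than prove is the horizon bookkeeping, and as stated it is not consistent with the framework you (and the paper) rely on. You claim the simulation bias propagates with only an additive $\mathcal{O}(H)$ factor because it is a ``deterministic'' perturbation, but in the recursion of Lemma~\ref{lem:convergence} the bias term enters at every level $h$ exactly like the concentration term, and the resulting $P_h^t$ are then summed again over $h$ in the suboptimality decomposition following Equation~\ref{eq:equation_3}; this produces a per-episode bias contribution of order $LH^2\sqrt{d/k}$, not $LH\sqrt{d/k}$ --- consistent with the $L\zeta H^2$ per-episode gap in Theorem~\ref{thm:bpi} and the $L\zeta H^2K = L\zeta HT$ regret term in Theorem~\ref{thm:regret}. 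Summing $LH^2\sqrt{d/k}$ over $k$ gives $LH^2\sqrt{dK} = L\sqrt{H^3dT}$, an extra factor of $H$ beyond the claimed $L\sqrt{HdT}$. To be fair, this tension is present in the paper itself: its one-line proof (``use the same arguments\dots the proof follows immediately'') also yields $L\sqrt{H^3dT}$ when carried out against its own lemmas, so the stated bound does not follow without a sharper treatment of how deterministic bias propagates. In short, your proposal reproduces both the paper's approach and its stated bound, but your $\mathcal{O}(H)$-propagation step would require a genuinely new argument (one that prevents the bias from being double-counted, once inside the bonus-driven Q-recursion and once in the policy-error sum over $h$) to be rigorous; as written it is a gap, albeit one the paper shares.
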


In general, without any assumptions on $f$, the parameter $d$ could scale with $\mathcal{O}(SA)$, and the results in Theorem~\ref{thm:online_estimator} would match the worst-case bounds in terms of the scaling with the number of states and actions. On the other hand, if $f$ is parameterized with few parameters or belongs to a family of functions where efficient online estimators exist, the theorem implies that the sample complexity is tied directly to the complexity of learning $f$. An important point to mention is that the online estimator needs to work from the data generated by the greedy policies $\{\pi^i\}_{i=1}^K$. If the $\mathcal{O}(K)$ switching of policies is too fast for the online estimator, one could exponentially reduce the switching of policies without any large impact on the regret (see, e.g., the work of \citet{bai2019provably} for a $\mathcal{O}(\log{K})$ switching and the work of \citet{qiao2022sample} for a $\mathcal{O}(\log{\log{K}})$ switching).

\section{Proofs Sketch}
In this section, we provide the intuition behind the proof of our results. For clarity, we will focus on the noiseless case $\zeta=0$. The full proofs for the general case $\zeta>0$ are provided in the appendix. We use $(s_h^k, a_h^k, w_h^k)$ to denote the actual state, action, and randomness observed or chosen at step $h$ of episode $k$. In addition, we use $\pi_h^k$ to denote the greedy policy with respect to $Q_h^k$. Notice that $a_h^k = \pi_h^k(s_h^k)$. Further, we define an empirical transition operator at step $h$ of episode $k$ as $[\mathbb{\hat{P}}_h^k V](s,a) \coloneqq V\big(f(s,a)+w_h^k\big)$ for all $(s,a) \in \mathcal{S} \times \mathcal{A}$. Moreover, we use $c>0$ to denote an absolute constant and $\iota \coloneqq \log(SAH/p)$ to denote a log factor.

\subsection{$(Q_h^k - Q_h^\star)$ Decomposition}
To start, we define useful quantities for the learning rate that will help us expand the updates carried out by Algorithm~\ref{alg:UCB_f}:
{\fontsize{9pt}{10.8pt} \selectfont
\begin{talign}
\begin{split}
    \alpha_t^0 \coloneqq (1{-}\alpha_1)\cdots(1-\alpha_t) &= \prod_{j=1}^t (1-\alpha_j) \\
    \alpha_t^i \coloneqq \alpha_i(1-\alpha_{i+1})\cdots(1-\alpha_t) &= \alpha_i \prod_{j=i+1}^t (1-\alpha_j)
\end{split}
\label{eq:alpha_i_t}
\end{talign}
}
$\alpha_t^0$ can be understood as the weight of the Q-value initialization in the current estimate of Q-values after $t$ total updates, while $\alpha_t^i$ as the weight of the $i$th update. With these definitions, we have for $t = k-1 \geq 1$:
{\fontsize{8pt}{10.8pt} \selectfont
\begin{equation}
\begin{split}
    (Q_h^k - Q_h^\star)(s,a) = \sum_{i=1}^{t} \alpha_t^i \Big[& \big(V_{h+1}^i - V_{h+1}^\star\big)(f(s,a) + w_h^i) \\ 
    &+  \big(\mathbb{\hat{P}}_h^i - \mathbb{P}_h\big)V_{h+1}^\star (s,a) + b_i \Big]
\end{split}
\label{eq:Q_error}
\end{equation}
}

There are three sources of errors in the previous equation: The bootstrapping with $V^i_{h+1}$ instead of $V^\star_{h+1}$, using samples $\mathbb{\hat{P}}_h^i$ to approximate the true expectation $\mathbb{P}_h$, and finally the bias due to adding optimism bonuses. Notice that, from concentration \cite{dubhashi2009concentration}, we know that the variance of $(\mathbb{\hat{P}}_h^i - \mathbb{P}_h)$ will scale with $\mathcal{O}(1/\sqrt{i})$ and the bonuses should be designed to have the same scaling. Further, $(V^i_{h+1} - V_{h+1}^\star)$ is related to $(Q_{h+1}^i - Q_{h+1}^\star)$, which would allow us to accumulate the errors over $h$ by recursion.

A key challenge in model-free Q-learning is controlling the propagation of errors over the $H$ steps. To illustrate this point, consider if we used a typical learning rate of $\eta_t = 1/t$. \linebreak In this case, we have $\eta_t^i \coloneqq (1{-}\eta_1)\cdots(1-\eta_t) = 1/t$ and $\sum_{i=1}^t \frac{\eta_t^i}{\sqrt{i}} \approx \sqrt{\frac{\log{t}}{t}}$. If we then do a recursion over $h$ we would end up with $(Q_1^k - Q_1^\star) \approx \frac{(\log{k})^{\sfrac{H}{2}}}{\sqrt{k}}$, which is \linebreak exponential in $H$. The following lemma, which is a fine-grained version of the one shown in \cite{jin2018q}, shows that we can avoid this exponential scaling with the rescaled linear rate $\alpha_t = \frac{H+1}{H+t}$.
\begin{lemma}
The following holds for $\alpha_t^i$:
{
\begin{equation*}
    \frac{1}{t^a} \leq \sum_{i=1}^t \frac{\alpha_t^i}{i^a} \leq \frac{1 + \frac{1}{H}}{t^a} \quad \text{for every} \ t \geq 1 \ \text{and} \ a \in \left[ \tfrac{1}{2}, 1 \right]
\end{equation*}
}
\label{lem:alpha_i_t}
\end{lemma}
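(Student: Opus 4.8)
The plan is to bound the key quantity $\sum_{i=1}^t \frac{\alpha_t^i}{i^a}$ by exploiting the explicit product form of $\alpha_t^i$ in Equation~\ref{eq:alpha_i_t}, together with the algebraic structure of the rescaled rate $\alpha_t = \frac{H+1}{H+t}$. The cleanest route is induction on $t$. First I would record two elementary identities that the weights satisfy, both provable by a short telescoping/product manipulation: $\sum_{i=1}^t \alpha_t^i = 1$ (the weights form a convex combination, so the $a=0$ case is an exact equality), and the one-step recursion $\alpha_t^i = (1-\alpha_t)\,\alpha_{t-1}^i$ for $i \le t-1$ together with $\alpha_t^t = \alpha_t$. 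These let me write $\sum_{i=1}^t \frac{\alpha_t^i}{i^a}$ in terms of the same sum at index $t-1$.

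For the \emph{upper bound}, set $U_t \coloneqq \sum_{i=1}^t \frac{\alpha_t^i}{i^a}$ and use the recursion to get
\begin{equation*}
    U_t = (1-\alpha_t)\,U_{t-1} + \frac{\alpha_t}{t^a}.
\end{equation*}
I would then prove by induction that $U_t \le \frac{1 + 1/H}{t^a}$. Plugging the inductive hypothesis $U_{t-1} \le \frac{1+1/H}{(t-1)^a}$ into the recursion, it suffices to check the scalar inequality
\begin{equation*}
    (1-\alpha_t)\,\frac{1+1/H}{(t-1)^a} + \frac{\alpha_t}{t^a} \le \frac{1+1/H}{t^a}.
\end{equation*}
Substituting $\alpha_t = \frac{H+1}{H+t}$ so that $1-\alpha_t = \frac{t-1}{H+t}$, this reduces to showing $\frac{t-1}{H+t}\cdot\frac{1+1/H}{(t-1)^a} \le \frac{1+1/H}{t^a} - \frac{(H+1)/(H+t)}{t^a}$, i.e. after clearing the common factor $\frac{1}{H+t}$, to a comparison of $(t-1)^{1-a}(1+1/H)$ against $(t-s)\,t^{-a}$ type terms; the key monotonicity fact I will invoke is that for $a \in [\tfrac12,1]$ the map $t \mapsto t^{1-a}$ is concave (or that $t^{1-a} - (t-1)^{1-a} \le (1-a)(t-1)^{-a}$), which is exactly what makes the $1+1/H$ slack survive across the whole range of $a$. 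The \emph{lower bound} $U_t \ge t^{-a}$ is analogous but easier: the same recursion with the reverse inequality and the base case $U_1 = \alpha_1 = 1$ close it, since here the error terms point the favorable way and no slack factor is needed.

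The main obstacle is the uniformity in $a \in [\tfrac12,1]$: the scalar inequality must hold for the whole interval simultaneously, and the worst case is not always at an endpoint, so I cannot simply plug in $a=\tfrac12$ and $a=1$. The resolution is to isolate the $a$-dependence into a single convexity/concavity statement about $x \mapsto x^{1-a}$ (equivalently, a first-order Taylor bound with controlled remainder) and verify that the constant $1+1/H$ is chosen precisely so the remainder is absorbed for every admissible $a$; checking the base case $t=1$, where both bounds hold with equality since $U_1 = 1 = 1/1^a$, anchors the induction. Finally I would double check the degenerate steps (the passage $i \le t-1$ versus the new term $i = t$, and that $(t-1)^{-a}$ is well defined at $t=2$) so that the induction has no boundary gaps.
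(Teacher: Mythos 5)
Your overall strategy coincides with the paper's own proof: both proceed by induction on $t$ using the one-step recursion $\sum_{i=1}^t \frac{\alpha_t^i}{i^a} = (1-\alpha_t)\sum_{i=1}^{t-1}\frac{\alpha_{t-1}^i}{i^a} + \frac{\alpha_t}{t^a}$ (from $\alpha_t^i = (1-\alpha_t)\alpha_{t-1}^i$ for $i \le t-1$), with base case $U_1 = \alpha_1 = 1$, and your treatment of the lower bound is exactly the paper's. The one place your plan goes astray is the tool you propose for closing the upper bound. Substituting $\alpha_t = \frac{H+1}{H+t}$, so that $1-\alpha_t = \frac{t-1}{H+t}$, and multiplying the scalar inequality
\begin{equation*}
(1-\alpha_t)\,\frac{1+\frac{1}{H}}{(t-1)^a} + \frac{\alpha_t}{t^a} \le \frac{1+\frac{1}{H}}{t^a}
\end{equation*}
through by $t^a(H+t)$ turns it into
\begin{equation*}
\left(1+\tfrac{1}{H}\right)(t-1)^{1-a}\,t^a + (H+1) \le \left(1+\tfrac{1}{H}\right)(H+t),
\end{equation*}
and since $\left(1+\tfrac{1}{H}\right)(H+t) - (H+1) = \left(1+\tfrac{1}{H}\right)t$, this reduces \emph{exactly} to $(t-1)^{1-a} \le t^{1-a}$, i.e.\ plain monotonicity of $x \mapsto x^{1-a}$, valid for every $a \le 1$. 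There is no remainder to absorb: the factor $1+\tfrac{1}{H}$ appears on both sides and cancels identically, which is also why the uniformity in $a$ that worried you is a non-issue. By contrast, the concavity/Taylor inequality you planned to invoke, $t^{1-a} - (t-1)^{1-a} \le (1-a)(t-1)^{-a}$, points the wrong way: it \emph{upper}-bounds the increment $t^{1-a}-(t-1)^{1-a}$, whereas what the reduction requires is precisely that this increment is nonnegative. If you carried out that step literally as written, it would not close the bound; once you replace it with the monotonicity statement (which you name but never actually state), your argument becomes the paper's proof verbatim.
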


\subsection{Convergence of On-Policy Errors}
Then, using Azuma-Hoeffding inequality, we choose $b_t$ such that the following holds simultaneously for all $(s,a,h,k) \in \mathcal{S} \times \mathcal{A} \times [H] \times \{2,\dots,K\}$ with high probability:
{
\fontsize{9pt}{10.8pt} \selectfont
\begin{equation}
0 \leq \sum_{i=1}^t \alpha_t^i \Bigg[ \left(\mathbb{\hat{P}}_h^i - \mathbb{P}_h\right)V_{h+1}^\star (s,a) + b_i \Bigg] \leq  2c\sqrt{\frac{H^3\iota}{t}}
\label{eq:concentration}
\end{equation}
}

Now, starting from Equation~\ref{eq:Q_error} and equipped with Lemma~\ref{lem:alpha_i_t} and the upper bound in Equation~\ref{eq:concentration}, we construct a careful recursive argument that shows a convergence of the Q-value on-policy (i.e., for the actions chosen by $\pi^k$) errors:
\begin{lemma}
Let $\left\{ P_h \right\}_{h=1}^{H}$ be a sequence that satisfy the following recursive relationship:
{
\begin{equation*}
    P_h = 4 \left(1+\frac{1}{H}\right) c\sqrt{H^3\iota} + \left(1+\frac{1}{H}\right)P_{h+1}
\end{equation*}
}
where $P_{H+1} \coloneqq 0$. Then, for any $p \in (0,1)$, letting $b_t = c \sqrt{H^3\iota/t}$, the following holds simultaneously for all $(s,h,k) \in \mathcal{S} \times [H] \times \{2,\dots,K\}$ with at least $1-p$ probability:
{
\begin{equation*}
    0 \leq (V_h^k - V_h^\star)(s) \leq (Q_h^k - Q_h^\star)(s, \pi_h^k(s)) \leq \frac{P_h}{\sqrt{t}}
\end{equation*}
}
where $t=k-1$.
\label{lem:bound_on_policy}
\end{lemma}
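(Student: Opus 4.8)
The plan is to prove all three inequalities simultaneously by downward induction on $h$, from $h=H+1$ (where $P_{H+1}=0$ and $V_{H+1}^k=V_{H+1}^\star=0$ by convention) down to $h=1$, conditioning throughout on the high-probability event of Equation~\ref{eq:concentration}. Since that event carries probability at least $1-p$ and everything afterward is deterministic, the whole lemma inherits the $1-p$ guarantee; importantly, the optimal $\log(1/p)$ dependence originates from the martingale concentration in Equation~\ref{eq:concentration} and not from any regret-to-PAC conversion. The two left inequalities are structural. The bound $(V_h^k-V_h^\star)(s)\le(Q_h^k-Q_h^\star)(s,\pi_h^k(s))$ holds unconditionally, because $V_h^k(s)\le\max_a Q_h^k(s,a)=Q_h^k(s,\pi_h^k(s))$ while $V_h^\star(s)=\max_a Q_h^\star(s,a)\ge Q_h^\star(s,\pi_h^k(s))$. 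The optimism $V_h^k\ge V_h^\star$ I would obtain from the stronger pointwise claim $Q_h^k(s,a)\ge Q_h^\star(s,a)$ for every $(s,a)$: in the decomposition of Equation~\ref{eq:Q_error} the weighted value-difference sum is nonnegative by the inductive optimism at step $h+1$ (applied at the simulated states, see below) and the concentration-plus-bonus block is nonnegative by the lower bound in Equation~\ref{eq:concentration}; taking the max over $a$ and truncating at $H$ then gives $V_h^k\ge V_h^\star$.

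For the quantitative upper bound I would start from Equation~\ref{eq:Q_error} evaluated at $a=\pi_h^k(s)$ and split it into the value-difference sum $\sum_{i=1}^t\alpha_t^i(V_{h+1}^i-V_{h+1}^\star)(f(s,a)+w_h^i)$ and the block $\sum_{i=1}^t\alpha_t^i[(\hat{\mathbb P}_h^i-\mathbb P_h)V_{h+1}^\star(s,a)+b_i]$, bounding the latter by $2c\sqrt{H^3\iota/t}$ via Equation~\ref{eq:concentration}. The key step --- and the reason a uniform-in-$s$ bound is the correct induction hypothesis --- is that the argument $f(s,a)+w_h^i$ is a \emph{simulated} next state that need not lie on any actual trajectory, so I must invoke the step-$(h+1)$ bound at an arbitrary state. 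The induction hypothesis, being uniform over $\mathcal S$, supplies exactly $(V_{h+1}^i-V_{h+1}^\star)(f(s,a)+w_h^i)\le P_{h+1}/\sqrt{i-1}$ for $i\ge2$, with the crude bound $H$ for the initialization term $i=1$. This uniformity is what the per-episode update of every $(s,a)$ with identical weights $\alpha_t^i$ buys us, and it is the feature distinguishing the analysis from standard asynchronous Q-learning.

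It then remains to sum the weights. Using Lemma~\ref{lem:alpha_i_t} with exponent $a=\tfrac12$ to control $\sum_i\alpha_t^i/\sqrt{i}$, together with the elementary comparison $\sqrt{i-1}\ge\sqrt{i}/\sqrt2$ on $i\ge2$ and the rapid decay of the single weight $\alpha_t^1$, I would collapse the value-difference sum into $(1+\tfrac1H)P_{h+1}/\sqrt t$ plus a residual of the same order as the concentration block. Absorbing the $i=1$ term and the $\sqrt{i-1}$-versus-$\sqrt{i}$ index shift into the leading constant (this is precisely what the factor $4$, rather than $2$, in the recursion leaves room for) yields $(Q_h^k-Q_h^\star)(s,\pi_h^k(s))\le\big[4(1+\tfrac1H)c\sqrt{H^3\iota}+(1+\tfrac1H)P_{h+1}\big]/\sqrt t=P_h/\sqrt t$, closing the induction. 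I expect the main obstacle to be exactly this propagation of error across the $H$ steps: the recursion multiplies the residual by $(1+\tfrac1H)$ at each of the $H$ levels, and the whole purpose of the rescaled learning rate $\alpha_t=(H+1)/(H+t)$ --- hence of the sharp $(1+\tfrac1H)$ factor in Lemma~\ref{lem:alpha_i_t} rather than any constant strictly larger than $1$ --- is that $(1+\tfrac1H)^H\le e$ keeps $P_1=O(\sqrt{H^5\iota})$ polynomial instead of exponential in $H$. The remaining bookkeeping (the base case at $h=H$, where the value-difference sum vanishes and the bound reduces to $2c\sqrt{H^3\iota/t}\le P_H/\sqrt t$, and unrolling the linear recursion for $P_h$) is routine.
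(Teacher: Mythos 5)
Your proposal follows the paper's own route (Lemmas~\ref{lem:optimism} and~\ref{lem:convergence} in the appendix, specialized to $\zeta=0$): condition on the Azuma--Hoeffding event of Equation~\ref{eq:concentration}, obtain the two left inequalities structurally with optimism $Q_h^k \ge Q_h^\star$ proved by backward induction, then close the quantitative bound by backward induction on $h$, invoking the hypothesis at the simulated states (which is exactly why uniformity over $\mathcal{S}$ is the right induction statement), summing the weights via Lemma~\ref{lem:alpha_i_t}, and killing the $i=1$ initialization term through the decay $\alpha_t^1 \le 1/t$. All of that is correct and matches the paper.

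The gap is the one step you wave through: converting $\sum_{i=2}^t \alpha_t^i P_{h+1}/\sqrt{i-1}$ into $(1+\tfrac1H)P_{h+1}/\sqrt{t}$ plus a residual ``of the same order as the concentration block.'' With your comparison $1/\sqrt{i-1} \le \sqrt{2}/\sqrt{i}$ the sum is bounded by $\sqrt{2}(1+\tfrac1H)P_{h+1}/\sqrt{t}$, and the excess $(\sqrt{2}-1)(1+\tfrac1H)P_{h+1}/\sqrt{t}$ is proportional to $P_{h+1}$, which by Equation~\ref{eq:P_h_bound} can be of order $(H-h)c\sqrt{H^3\iota}$ --- a factor of $H$ too large to fit in the additive slack $4(1+\tfrac1H)c\sqrt{H^3\iota}/\sqrt{t}$, which is moreover already fully consumed by the concentration block and the $\alpha_t^1 H$ term (that is all the factor $4$ versus $2$ pays for). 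Keeping $\sqrt{2}$ instead as a coefficient on $P_{h+1}$ turns the recursion into $P_h \ge \sqrt{2}(1+\tfrac1H)P_{h+1} + \cdots$, which unrolls to $2^{H/2}$ --- precisely the exponential blow-up the lemma exists to prevent. In fact no absorption of this kind can work uniformly in $t$: the needed inequality $\sum_{i=2}^t \alpha_t^i/\sqrt{i-1} \le (1+\mathcal{O}(1/H))/\sqrt{t}$ is false for small $t$ (at $t=2$ the left side equals $\alpha_2 = \tfrac{H+1}{H+2}$ while the right side is about $1/\sqrt{2}$, so it already fails for $H \ge 5$). A correct treatment needs two ingredients: the second-order estimate $1/\sqrt{i-1} \le 1/\sqrt{i} + \sqrt{2}\,i^{-3/2}$ combined with $\sum_i \alpha_t^i/i \le (1+\tfrac1H)/t$ (Lemma~\ref{lem:alpha_i_t} at $a=1$), which makes the excess of order $P_{h+1}/t$ rather than $P_{h+1}/\sqrt{t}$; and a case split on $t$, since $P_{h+1}/t$ is dominated by $c\sqrt{H^3\iota}/\sqrt{t}$ only when $t$ is at least of order $(H-h)^2$, while smaller $t$ must be handled by bounding the value-difference sum crudely by $H$, which suffices there because $P_h/\sqrt{t}$ itself exceeds $H + \mathcal{O}(c\sqrt{H^3\iota/t})$ in that regime (this costs a slightly larger absolute constant in the recursion for $P_h$). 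In your defense, the paper's own proof of Lemma~\ref{lem:convergence} silently commits the same off-by-one --- its step marked (1) invokes the hypothesis as $P_{h+1}^i/\sqrt{i}$ when the hypothesis delivers $P_{h+1}^{i-1}/\sqrt{i-1}$ --- so you have put your finger on a real subtlety that the paper glosses over; but your proposed patch does not close it.
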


Lemma~\ref{lem:bound_on_policy} states that the value function estimates are optimistic and converge with a $1/\sqrt{k}$ rate. Importantly, the statement holds uniformly for all $s \in \mathcal{S}$. Note that the error in $Q_h^1$ is determined by the initialization, and thus, is excluded from the lemma.
Finally, the elements $\left\{ P_h \right\}_{h=1}^{H}$ are upper bounded by a function that is polynomial\footnote{The the elementary fact $\left(1+\frac{1}{x}\right)^x \leq e$ is helpful here.} in $H$ and and log-polynomial in $S$ and $A$:
{
\begin{equation}
    P_h \leq 4(1+H-h) c\sqrt{H^3\iota} \left(1+\frac{1}{H}\right)^{1+H-h}
\label{eq:P_h_bound}
\end{equation}
}

We note that the argument presented here is different than the one used in previous literature (e.g., in \citet{jin2018q} or \citet{li2021breaking}), where the propagation of errors over the $H$ steps is controlled after summing over $K$ and using different property of $\alpha_t^i$. We cannot apply the same argument here as it requires that the Q-values are updated only for a \emph{single} trajectory per episode, while in our case all the Q-values are updates per episode. Our argument exploits the uniformity of the number of updates for every $(s,a)$ pair to show convergence before summing over $K$.

\subsection{Regret and Best Policy Identification}
Now, we bound the suboptimality gap $(V_h^\star - V_h^{\pi^k})$. Following the standard techniques for optimum-based methods, we start by showing:
{
\begin{align*}
    \big( V_h^\star - V_h^{\pi^k} \big) (s) \leq & \big( V_h^k - V_h^{\pi^k} \big) (s)  \\
    \leq & \big( Q_h^k - Q_h^\star \big)(s, \pi_h^k(s)) \\
    & + \mathbb{P}_h \big( V_{h+1}^\star - V_{h+1}^{\pi^k} \big) (s, \pi_h^k(s))
    \numberthis \label{eq:suboptimality_1}
\end{align*}
}
The first equality follows as $V_h^k \geq V_h^\star$ and the second from Bellman Equations~\ref{eq:bellman} after adding and subtracting $Q_h^\star$. Notice that the first term in Equation~\ref{eq:suboptimality_1} is upper bounded by Lemma~\ref{lem:bound_on_policy} and the second term is an expectation over suboptimality gap in the next step $h+1$. Thus, we can establish by recursion that the following holds with high probability:
{
\begin{align}
    \big( V_1^\star - V_1^{\pi^k} \big) (s) \leq \sum_{h=1}^H \frac{P_h}{\sqrt{k-1}} \leq \mathcal{O}\left( \sqrt{\frac{H^7\iota}{k-1}} \right) \label{eq:suboptimality_2}
\end{align}
}
This is sufficient to show convergence to optimal policies, and regret follows immediately by summing over $K$ and applying Jensen's inequality. The proofs for $\zeta>0$ follow the same recipe presented here but are more elaborate as the bonus design needs to guarantee that the estimates $Q_h^k$ are optimistic even with the errors in $\hat{f}$.

\begin{figure*}%
    \centering
    \includegraphics[width=7in]{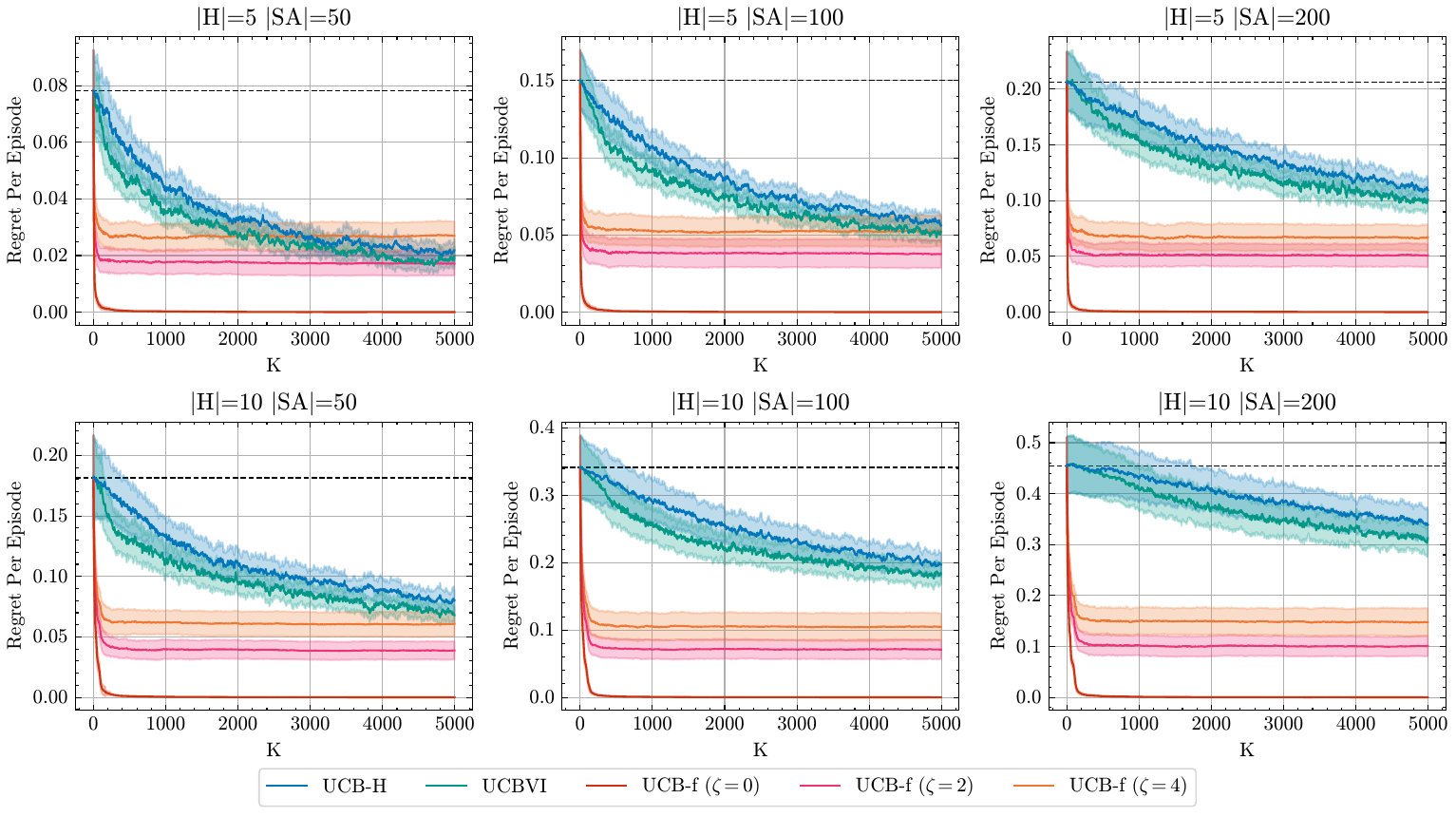}
    \caption{$(V_1^\star - V_1^{\pi^k})$ for different Q-learning algorithms on randomly generated MDPs. Each curve is an average of 50 simulations. The dashed lines are the performance of the greedy policy with respect to the rewards.}%
    \label{fig:regret_compare}
\end{figure*}

\section{Experiment}
In this section, we empirically compare our algorithm with multiple problem-agnostic Q-learning algorithms from the literature. In particular, we compare with the model-free UCB-H method of \citet{jin2018q} and the model-based UCBVI method of \citet{azar2017minimax}. When we test Algorithm~\ref{alg:UCB_f}, we either provide it the true function $f$ (to test $\zeta=0$) or we corrupt $f$ with random noise (to test $\zeta>0$) before we give it to the algorithm. Inside a single run, the approximate function $\hat{f}$ is kept fixed across the time $T$.

\subsubsection{Environments.}
We compare the algorithms on randomly generated MDPs with varying cardinalities. Further, as our method and UCBVI require the rewards to be known, we choose to make $r_h(s,a) = r_h(s)$. This ensures that there is no utility in being greedy with respect to the rewards, and the optimal policy is the one that controls the system toward a desired region in the state space.

\subsubsection{Bonuses.}
As multiplicative constants in the bonuses can drastically change empirical performances, we unify the bonus design across the different methods. Specifically, we use the following bonuses in the empirical simulations:
{
\begin{align}
    \beta_1 = c\sqrt{\frac{H^2}{N_h^k(s,a)}},\quad \beta_2 = c\sqrt{\frac{H^2}{k}} + c\zeta L
\end{align}
}
UCB-H and UCBVI use $\beta_1$, and our method uses $\beta_2$. $N_h^k(s,a)$ is the state-action visitation counter used by the problem-agnostic methods. Further, as our goal is to study qualitative behavior, we choose to optimize $c$ by setting it $c=0.05$. This would accelerate the convergence of all algorithms, especially UCB-H and UCBVI as their visitation counter is distributed over the state-action pairs.

\subsubsection{Results.} In Figure~\ref{fig:regret_compare}, we plot the regret per episode (i.e., the suboptimality gap) for $S=25$, $A=\{2,4,8\}$, $H=\{5,10\}$, $W=5$, $\zeta=\{0,2,4\}$, $L=0.25$, and $K\in[5000]$. Each curve is an average of 50 simulations, and the shadings represent a width of one standard deviation. Moreover, the dashed lines represent the performance of the greedy policy with respect to rewards. Because of our choice for $r_h$, the rewards-greedy policy is no better than the random policies that the algorithms start with.

As we can see from Figure~\ref{fig:regret_compare}, Algorithm~\ref{alg:UCB_f} always converge to the optimal policy (when $\zeta=0$) or to a suboptimal policy (when $\zeta>0$). Moreover, the time until convergence is not affected by increasing the cardinalities of states and actions. This is in contrast to UCB-H and UCBVI, where it is evident from the figure that their convergence is affected by $S$ and $A$, as they take longer to outperform the case of $\zeta>0$ when the cardinalities increase. Finally, we note that the variance in the figure when $\zeta>0$ is not from within run variations but is from the randomness of generating a new MDP in each simulation. The code to reproduce these results is publicly available\footnote{\url{https://github.com/meshal-h/ucb-f}}.

\section{Discussion}

In this section, we elaborate on some aspects of our algorithm and methodology and discuss potential extensions of our study.

\subsubsection{Dependency on the Horizon.}
We believe that the scaling with the horizon in our results is not optimal, and there is an extra $H$ factor that is an artifact of the analysis. Further, sharpening the horizon dependency by a $\sqrt{H}$ factor is possible with a more elaborate bonus design that relies on Bernstein inequality. Moreover, variance reduction techniques were shown to yield another $\sqrt{H}$ factor improvement \cite{li2021breaking}. We note that using these techniques usually adds lower-order terms to the regret (see, Table~\ref{table:complexity}). As our focus in this paper is on the dependency on $S$ and $A$, we leave applying these ideas to our algorithm as future directions.

\subsubsection{Value Function Assumptions.}
As shown by \citet{jiang2018pac}, without further assumptions, no algorithm can always achieve sample complexity independent of the number of states and actions, even if it has access to a transition kernel that is only corrupted at one state-action pair. We note that our assumptions in Equation~\ref{eq:lipschitz_assumption} only concern the optimal value functions $V_h^\star(s)$, and no smoothness is required for the value function generated by non-optimal policies. For future analysis, we could consider more local regularity assumptions on the optimal value function or adapt some of the metrics used by the problem-dependent bounds discussed in the literature review.

\subsubsection{Noise in the Model.}
Not all noise structures in $\hat{f}$ are equivalent. In fact, some can be completely benign; The current analysis suggests that if the approximate function differs from the true function by a constant (i.e., $\hat{f} = f + c$ for all state-action pairs), then there is no linear term in the regret when we use $\hat{f}$ (owing to the cancellation of $c$ that would happen between Line~\ref{line_1} and \ref{line_2} in Algorithm~\ref{alg:UCB_f}). This would suggest that the use of $\Vert \hat{f} - f \Vert_\infty$ in the analysis can be refined, which will also provide relaxations to the conditions required for the online estimator in Theorem~\ref{thm:online_estimator}.

\subsubsection{Systems that Fit the Additive Disturbance Model.} 
Perhaps the simplest example is inventory control $S_{h+1} = S_{h} + A_{h} - W_{h}$, where $S_{h}$ is the inventory level, $A_{h}$ is the amount of goods ordered, and $W_{h}$ is the demand observed after taking action $A_{h}$. In this model, the state can be negative to represent backlogged demand. We note that although we restricted the states, actions, and disturbances to be positive, there is no loss of generality and our analysis extends to the case where negative values are allowed. The additive disturbance model in our study is flexible and captures systems such as inventory control and many other similar problems that arise in operations research. Further, as the additive disturbance model is prevalent in control problems with continuous spaces, we discuss the implications of our results on these systems in the following subsection.

\subsubsection{Continuous Spaces.} 
With the appropriate topological assumptions, Algorithm~\ref{alg:UCB_f} can be applied to continuous problems after discretization and inheriting the metric (to measure $f - \hat{f}$) from the continuous space (see, e.g., \citet{shah2018q} for an example of Q-learning with discretization). Our results would suggest that we will be able to learn an optimal policy (up to $\Delta$ discretization errors) when $\zeta=0$ in a number of samples that do not scale with state and action cardinalities, which is $\mathcal{O}\left(\frac{1}{\Delta^2}\right)$ in this case. A caveat here is that computation will also scale with $\mathcal{O}\left(\frac{1}{\Delta^2}\right)$ (cf. Table~\ref{table:complexity}). Still, one might be able to modify Algorithm~\ref{alg:UCB_f} to make it more practical under sufficient smoothness conditions. A more interesting line of future work is to see if our methodology can be extended to the value function approximation setting, which is more appropriate in continuous spaces.

\section{Conclusion}
In this paper, we studied the sample complexity of online RL for systems that evolve according to an additive disturbance model of the form $S_{h+1} = f(S_h, A_h) + W_h$. We developed an optimistic Q-learning algorithm that achieves $\sqrt{T}$ regret without dependency in $S$ and $A$ when $f$ is known. Further, if an asymptotically accurate online estimator for $f$ exists, we also achieve $\sqrt{T}$ regret that depends only on the complexity of learning $f$. Our methodology accomplishes all of that without explicitly modeling transition probabilities, and our algorithm enjoys the same memory complexity as model-free methods. Future research directions include extending our algorithm and analysis to infinite horizon MDPs and problems involving value function approximations.

\section{Acknowledgments}
Meshal Alharbi is supported by a scholarship from King Abdulaziz City for Science and Technology (KACST).

{
\fontsize{9.8pt}{10.8pt} \selectfont
\bibliography{references}
}

%
\clearpage
\appendix
\onecolumn

\section{Properties of $\alpha_t^i$}

We restate Lemma~\ref{lem:alpha} to add other helpful properties for the proofs.

\begin{lem}
The following properties hold for $\alpha_t^i$:
\begin{enumerate}
    \item $\frac{1}{t^a} \leq \sum_{i=1}^t \frac{\alpha_t^i}{i^a} \leq \frac{1 + \frac{1}{H}}{t^a} \ for\ every\ t \geq 1$ and $a \in \left[ \frac{1}{2}, 1 \right]$ \label{lem:alpha_a}
    \item $\max_{i \in [t]} \alpha_t^i \leq \frac{2H}{t} \ and\ \sum_{i=1}^t (\alpha_t^i)^2 \leq \frac{2H}{t} \ for\ every\ t \geq 1$ \label{lem:alpha_b}
    \item $\sum_{t=i}^\infty \alpha_t^i = 1 + \frac{1}{H} \ for\ every\ i \geq 1$ \label{lem:alpha_c}
\end{enumerate}
\label{lem:alpha}
\end{lem}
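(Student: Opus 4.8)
The plan is to derive all three parts from two structural facts that follow immediately from the definition in Equation~\ref{eq:alpha_i_t}: the one-step recursion $\alpha_t^i = (1-\alpha_t)\,\alpha_{t-1}^i$ for $i<t$ with $\alpha_t^t = \alpha_t$, and the normalization $\sum_{i=1}^t \alpha_t^i = 1$ for every $t \geq 1$. The normalization follows by a short induction on $t$ using the recursion, after noting that $\alpha_t^0 = \prod_{j=1}^t(1-\alpha_j) = 0$ for all $t \geq 1$, since $1-\alpha_1 = 0$ under the choice $\alpha_t = \frac{H+1}{H+t}$. I would also record the closed form $\alpha_t^i = (H+1)\frac{(t-1)!\,(H+i-1)!}{(i-1)!\,(H+t)!}$, obtained by unrolling $\prod_{j=i+1}^t(1-\alpha_j) = \prod_{j=i+1}^t \frac{j-1}{H+j}$ into ratios of factorials; this form is convenient for the infinite-sum identity in part~\ref{lem:alpha_c}.

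For part~\ref{lem:alpha_a}, I would fix $a \in [\tfrac12,1]$ and induct on $t$. Writing $S_t \coloneqq \sum_{i=1}^t \alpha_t^i/i^a$, the recursion gives $S_t = \frac{\alpha_t}{t^a} + (1-\alpha_t)S_{t-1}$ with base case $S_1 = 1$. For the lower bound, the hypothesis $S_{t-1}\ge (t-1)^{-a}$ together with $(t-1)^{-a} \ge t^{-a}$ yields $S_t \ge \frac{\alpha_t}{t^a} + (1-\alpha_t)t^{-a} = t^{-a}$. For the upper bound, substituting $\alpha_t = \frac{H+1}{H+t}$, $1-\alpha_t = \frac{t-1}{H+t}$ and the hypothesis $S_{t-1} \le \frac{1+1/H}{(t-1)^a}$, the target $S_t \le \frac{1+1/H}{t^a}$ reduces, after clearing the common factor $\frac{H+1}{H+t}$, to the elementary claim $(t-1)^{1-a} \le t^{1-a}$, valid for all $a \le 1$. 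Because this reduction is uniform in $a$, a single induction covers the entire range $[\tfrac12,1]$.

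For part~\ref{lem:alpha_b}, I would first establish that $\alpha_t^i$ is increasing in $i$ by computing the ratio $\frac{\alpha_t^{i+1}}{\alpha_t^i} = \frac{\alpha_{i+1}}{\alpha_i(1-\alpha_{i+1})} = 1+\frac{H}{i} > 1$, so the maximum over $i \in [t]$ is attained at $i=t$ and equals $\alpha_t = \frac{H+1}{H+t} \le \frac{H+1}{t} \le \frac{2H}{t}$ using $H \ge 1$; the sum-of-squares bound then follows from $\sum_{i=1}^t(\alpha_t^i)^2 \le (\max_i \alpha_t^i)\sum_{i=1}^t \alpha_t^i \le \frac{2H}{t}\cdot 1$. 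For part~\ref{lem:alpha_c}, I would insert the closed form into $\sum_{t=i}^\infty \alpha_t^i$ and telescope using $\frac{(t-1)!}{(H+t)!} = \frac1H\big[\frac{(t-1)!}{(H+t-1)!} - \frac{t!}{(H+t)!}\big]$; since $\frac{T!}{(H+T)!}\to 0$, the sum collapses to $(H+1)\frac{(H+i-1)!}{(i-1)!}\cdot\frac1H\frac{(i-1)!}{(H+i-1)!} = 1+\frac1H$.

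I expect the main obstacle to be closing the upper bound in part~\ref{lem:alpha_a} with exactly the constant $1+\frac1H$: the induction works only because the specific learning rate makes the residual inequality collapse to $(t-1)^{1-a}\le t^{1-a}$ with no leftover slack, so one has to check carefully that the base case and the clearing of the $\frac{H+1}{H+t}$ factor lose nothing. The remaining parts are mechanical once the recursion, the monotonicity ratio, and the telescoping identity are in hand.
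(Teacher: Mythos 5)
Your proof is correct in all three parts, and it splits naturally into two comparisons. For part (1), your induction is essentially the paper's own argument: the same recursion $S_t = \alpha_t t^{-a} + (1-\alpha_t)S_{t-1}$, the same base case $S_1 = 1$, the same lower-bound step, and the same reduction of the upper bound to $(t-1)^{1-a}\le t^{1-a}$ after substituting $\alpha_t = \frac{H+1}{H+t}$ (the paper carries the factor through explicitly and recombines, rather than clearing the common factor $\frac{H+1}{H+t}$, but the algebra is identical). Where you genuinely diverge is parts (2) and (3): the paper does not prove these at all, deferring to Appendix B of \citet{jin2018q}, whereas you supply self-contained arguments. Your proof of (2) via the ratio $\alpha_t^{i+1}/\alpha_t^i = \frac{\alpha_{i+1}}{\alpha_i(1-\alpha_{i+1})} = 1 + \frac{H}{i} > 1$, so that the maximum is $\alpha_t^t = \alpha_t = \frac{H+1}{H+t} \le \frac{2H}{t}$, combined with $\sum_i(\alpha_t^i)^2 \le \bigl(\max_i \alpha_t^i\bigr)\sum_i \alpha_t^i$ and the normalization $\sum_{i=1}^t \alpha_t^i = 1$, is correct. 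Your proof of (3) also checks out: the closed form $\alpha_t^i = (H+1)\frac{(t-1)!\,(H+i-1)!}{(i-1)!\,(H+t)!}$ follows from unrolling the product, the telescoping identity $\frac{(t-1)!}{(H+t)!} = \frac{1}{H}\bigl[\frac{(t-1)!}{(H+t-1)!} - \frac{t!}{(H+t)!}\bigr]$ is verified by direct computation, and the tail vanishes since $\frac{T!}{(H+T)!} \le \frac{1}{T+1} \to 0$. What your route buys is a fully self-contained lemma, with (3) made transparent by the factorial closed form; what the paper's route buys is brevity, since (2) and (3) are standard properties of this learning rate already established in the cited work.
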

\begin{proof}
We prove $(1)$ by induction over $t$. For the base case $t=1$, the statement holds since:
\begin{equation*}
    \sum_{i=1}^t \frac{\alpha_t^i}{i^a} = \alpha_1^1 = \alpha_1 = 1 \in \left[1, 1 +\frac{1}{H} \right]
\end{equation*}
Now suppose that $t \geq 2$. From Equation~\ref{eq:alpha_i_t}, we see that $\alpha_t^i = \alpha_{t-1}^i (1 - \alpha_t)$. Thus:
\begin{equation*}
    \sum_{i=1}^t \frac{\alpha_t^i}{i^a} = (1 - \alpha_t)\sum_{i=1}^{t-1} \frac{\alpha_{t-1}^i}{i^a} + \frac{\alpha_t}{t^a}
\end{equation*}
Then, assuming that $\frac{1}{(t-1)^a} \leq \sum_{i=1}^{t-1} \frac{\alpha_{t-1}^i}{i^a} \leq \frac{1 + \frac{1}{H}}{(t-1)^a}$, we have:
\begin{equation*}
    (1 - \alpha_t)\sum_{i=1}^{t-1} \frac{\alpha_{t-1}^i}{i^a} + \frac{\alpha_t}{t^a} 
    \geq \frac{1 - \alpha_t}{(t-1)^a} + \frac{\alpha_t}{t^a} 
    \geq \frac{1 - \alpha_t}{t^a} + \frac{\alpha_t}{t^a}
    = \frac{1}{t^a}
\end{equation*}
On the other hand, we have:
\begin{align*}
    (1 - \alpha_t)\sum_{i=1}^{t-1} \frac{\alpha_{t-1}^i}{i^a} + \frac{\alpha_t}{t^a} 
    &\leq \frac{(1 - \alpha_t)(1 + \frac{1}{H})}{(t-1)^a} + \frac{\alpha_t}{t^a}
    \stackrel{\text{\tiny\Circled{1}}}{=} \frac{(t-1)^{1-a}(1 + \frac{1}{H})}{H+t} + \frac{H+1}{t^a(H+t)} \\
    &\stackrel{\text{\tiny\Circled{2}}}{\leq} \frac{t^{1-a}(1 + \frac{1}{H})}{H+t} + \frac{H+1}{t^a(H+t)}
    = \frac{t + \frac{t}{H} + H + \frac{H}{H}}{t^a(H+t)} = \frac{1+\frac{1}{H}}{t^a}
\end{align*}
where {\tiny\CircledTop{1}} holds from the choice $\alpha_t = \frac{H+1}{H+t}$ and {\tiny\CircledTop{2}} holds since $a \leq 1$. This concludes the induction. The proofs of $(2)$ and $(3)$ are given in Appendix B in \citet{jin2018q}.
\end{proof}

Moreover, it is straightforward to show that for $t=0$, we have $\alpha_t^0 = 1$ and $\sum_{i=1}^t \alpha_t^i = 0$. Further, for $t \geq 1$, we have $\alpha_t^0 = 0$ and $\sum_{i=1}^t \alpha_t^i = 1$.


\section{Sample Complexity Proofs when $\zeta>0$}

\subsection{Notations}

Without loss of generality, we assume that we can write the noisy model as:
\begin{align}
    \hat{f}(s,a) = f(s,a) + \varepsilon_{s,a}
\end{align}
where $| \varepsilon_{s,a} | \leq \zeta / 2$. Further, we define the following quantities:
\begin{align*}
    \varepsilon_{s,a}^{h,i} &= \big( f(s,a) - \hat{f}(s,a) \big) - \big( f(s_h^i,a_h^i) - \hat{f}(s_h^i,a_h^i) \big) \numberthis \\
    &= \varepsilon_{s,a} - \varepsilon_{s_h^i,a_h^i} \\
    &\leq \zeta
\end{align*}
We keep using $(s_h^k, a_h^k, w_h^k)$ to denote the realizations of variables and $\pi_h^k$ to denote the greedy policy with respect to $Q_h^k$. Further, we use $c>0$ to denote an absolute constant and $\iota \coloneqq \log(SAH/p)$. On the other hand, we \emph{redefine} the empirical transition operator at step $h$ of episode $k$ to:
\begin{equation}
    [\mathbb{\hat{P}}_h^k V](s,a) \coloneqq V\big( f(s,a) + w_h^k + \varepsilon_{s,a}^{h,k} \big)
\label{eq:empirical_operator_noisy}
\end{equation}
With these notations, the decomposition of $Q_h^k - Q_h^\star$ for any $(s, a, h, k) \in \mathcal{S} \times \mathcal{A} \times [H] \times [K]$ is given by:
\begin{equation}
    (Q_h^k - Q_h^\star)(s,a) = \alpha_t^0 (H - Q_h^\star(s,a)) + \sum_{i=1}^t \alpha_t^i \left[ (V_{h+1}^i - V_{h+1}^\star)(f(s,a) + w_h^i +\varepsilon_{s,a}^{h,i}) + \left[ \left(\mathbb{\hat{P}}_h^i - \mathbb{P}_h\right)V_{h+1}^\star \right](s,a) + b_i\right]
\label{eq:decomposition_noisy}
\end{equation}

\subsection{Optimism}

In the $\zeta > 0$ setting, bonus design needs to guarantee that the estimates $Q_h^k$ are optimistic even with the errors $\varepsilon_{s,a}$. We establish this formally in the following lemma.
\begin{lemma}
For any $p \in (0,1)$, letting $b_t = c\sqrt{\frac{H^3\iota}{t}} + L\zeta$, the following holds simultaneously for all $(s,a,h,k) \in \mathcal{S} \times \mathcal{A} \times [H] \times \{2,\dots,K\}$ with at least $1-p$ probability:
\begin{equation*}
    0 \leq (Q_h^k - Q_h^\star)(s,a) \leq \sum_{i=1}^t \alpha_t^i (V_{h+1}^i - V_{h+1}^\star)(f(s,a) + w_h^i +\varepsilon_{s,a}^{h,i}) + 2 \left( 1 + \frac{1}{H} \right) b_t
\end{equation*}
where $t=k-1$.
\label{lem:optimism}
\end{lemma}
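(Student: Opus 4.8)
\section{Proof Proposal for Lemma~\ref{lem:optimism}}

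The plan is to first carve out a single high-probability \emph{good event} on which the remaining argument becomes deterministic, and then, on that event, prove the two inequalities separately: the lower bound (optimism) by a backward induction on $h$, and the upper bound directly from the decomposition in Equation~\ref{eq:decomposition_noisy}. Throughout I use that $\alpha_t^0=0$ for $t=k-1\geq 1$, so the initialization term in Equation~\ref{eq:decomposition_noisy} drops out over the stated range $k\in\{2,\dots,K\}$.

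The central object is the middle sum $\sum_{i=1}^t \alpha_t^i\big[(\mathbb{\hat{P}}_h^i-\mathbb{P}_h)V_{h+1}^\star(s,a)+b_i\big]$. For a fixed $(s,a,h)$ I would split each empirical-operator term into a \emph{bias} piece and a mean-zero \emph{fluctuation} piece. Writing $W_h^i$ for the true disturbance realized at step $h$ of episode $i$ (so $s_{h+1}^i=f(s_h^i,a_h^i)+W_h^i$), the point $f(s,a)+w_h^i+\varepsilon_{s,a}^{h,i}$ at which $\mathbb{\hat{P}}_h^i$ evaluates $V_{h+1}^\star$ differs from $f(s,a)+W_h^i$ only by a deterministic offset of magnitude at most $\zeta$ (this is exactly what $\varepsilon_{s,a}^{h,i}$ tracks in the appendix notation). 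Hence
\begin{align*}
(\mathbb{\hat{P}}_h^i-\mathbb{P}_h)V_{h+1}^\star(s,a) &= \underbrace{\big[\,\mathbb{\hat{P}}_h^i V_{h+1}^\star(s,a)-V_{h+1}^\star(f(s,a)+W_h^i)\,\big]}_{\text{bias: }|\cdot|\le L\zeta\text{ by Eq.~\ref{eq:lipschitz_assumption}}} \\
&\quad + \underbrace{\big[\,V_{h+1}^\star(f(s,a)+W_h^i)-[\mathbb{P}_h V_{h+1}^\star](s,a)\,\big]}_{=:\,X_i}.
\end{align*}
Because $V_{h+1}^\star$ is the \emph{fixed} optimal value function rather than a learned estimate, $\{X_i\}_i$ is a bounded martingale difference sequence ($|X_i|\le H$, conditional mean zero since $W_h^i$ is drawn fresh and independent of the past). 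Applying Azuma--Hoeffding to the deterministically weighted sum, using $\sum_{i=1}^t(\alpha_t^i)^2\le 2H/t$ from Lemma~\ref{lem:alpha}, and union-bounding over $(s,a,h)$ and $t\in\{1,\dots,K-1\}$ (with the logarithmic cost folded into $\iota$) produces the good event on which $\big|\sum_{i=1}^t\alpha_t^i X_i\big|\le c\sqrt{H^3\iota/t}$ for all indices, with probability at least $1-p$.

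On this event the bonus $b_i=c\sqrt{H^3\iota/i}+L\zeta$ is engineered to dominate both pieces. For the \textbf{lower bound} I argue by backward induction on $h$ (base case $V_{H+1}^{\cdot}=0$). Assuming $V_{h+1}^i\ge V_{h+1}^\star$ pointwise for all $i$, the $(V_{h+1}^i-V_{h+1}^\star)$ term in Equation~\ref{eq:decomposition_noisy} is nonnegative, while $\mathrm{bias}_i+b_i\ge -L\zeta+c\sqrt{H^3\iota/i}+L\zeta=c\sqrt{H^3\iota/i}$, so the middle sum is at least $\sum_i\alpha_t^i X_i+c\sqrt{H^3\iota}\sum_i\alpha_t^i/\sqrt{i}\ge -c\sqrt{H^3\iota/t}+c\sqrt{H^3\iota/t}=0$, using the lower half of Lemma~\ref{lem:alpha} with $a=\tfrac12$; hence $Q_h^k\ge Q_h^\star$ and thus $V_h^k\ge V_h^\star$. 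For the \textbf{upper bound} no induction is needed: bounding $\mathrm{bias}_i\le +L\zeta$, using $\sum_i\alpha_t^i X_i\le c\sqrt{H^3\iota/t}$, the upper half of Lemma~\ref{lem:alpha} ($\sum_i\alpha_t^i/\sqrt{i}\le(1+\tfrac1H)/\sqrt{t}$), and $\sum_i\alpha_t^i=1$, the middle sum is at most $(2+\tfrac1H)c\sqrt{H^3\iota/t}+2L\zeta\le 2(1+\tfrac1H)b_t$, leaving the $(V_{h+1}^i-V_{h+1}^\star)$ term intact exactly as in the statement.

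The main obstacle is the concentration step and the alignment of constants so that a single bonus validates both conclusions. The delicate points are: (i) concentrating around the \emph{fixed} $V^\star$ so that $\{X_i\}$ is genuinely a martingale difference sequence, with the learned-value error quarantined in the separate $(V_{h+1}^i-V_{h+1}^\star)$ term (the backward induction is what lets the optimism at level $h{+}1$ feed into level $h$); (ii) the extra $\varepsilon_{s,a}^{h,i}$ shift from using $\hat f$ instead of $f$, which Lipschitz continuity converts into a clean additive bias bounded by $L\zeta$ that the $L\zeta$ summand in $b_i$ absorbs; and (iii) keeping the simultaneous-over-$(s,a,h,t)$ union bound to a logarithmic cost. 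Everything else reduces to the two-sided estimate in Lemma~\ref{lem:alpha} and routine bookkeeping.
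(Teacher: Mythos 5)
Your proposal is correct and follows essentially the same route as the paper's proof: the same Azuma--Hoeffding concentration applied to the martingale difference sequence $X_i = V_{h+1}^\star(f(s,a)+W_h^i) - [\mathbb{P}_h V_{h+1}^\star](s,a)$ with $\sum_i (\alpha_t^i)^2 \le 2H/t$, the same Lipschitz conversion of the $\varepsilon_{s,a}^{h,i}$ offset into an $L\zeta$ bias absorbed by the bonus, the same two-sided bounds on $\sum_i \alpha_t^i b_i$ via Lemma~\ref{lem:alpha}, and the same backward induction on $h$ for optimism (your step from $Q_{h+1}^i \ge Q_{h+1}^\star$ to $V_{h+1}^i \ge V_{h+1}^\star$ is exactly the paper's $\argmax$ argument through the $\min\{H,\cdot\}$ truncation). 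The only cosmetic differences are that you name the bias/fluctuation split explicitly and union-bound over $t$ as well as $(s,a,h)$, which is if anything slightly more careful than the paper.
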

\begin{proof}
To begin, let $\mathcal{F}_i$ be the $\sigma$-field generated by all the random variables until episode $i$. For any $(s,a,h) \in \mathcal{S} \times \mathcal{A} \times [H]$, we define for all $i \in [K]$:
\begin{align*}
    X_i = V_{h+1}^\star(f(s,a) + w_h^i) - \mathbb{P}_h V_{h+1}^\star(s,a)
\end{align*}
Notice that $X_i$ is a martingale difference sequence with respect to the filtration $\{\mathcal{F}_i\}_{i\geq0}$. Moreover, we have $|\alpha_i^t X_i| \leq \alpha_i^t H$. Thus, by Azuma-Hoeffding inequality, we have with probability at least $1-\frac{p}{SAH}$:
\begin{align*}
    \left| \sum_{i=1}^t \alpha_i^t X_i \right| \leq \frac{cH}{2} \sqrt{\iota \sum_{i=1}^{t} (\alpha_i^t)^2} \leq c \sqrt{\frac{H^3\iota}{t}}
\end{align*}
The second inequality follows from Lemma~\ref{lem:alpha_b}. Then, using a union bound on $s,a,$ and $h$, we see that the following holds simultaneously for all $(s,a,h,k) \in \mathcal{S} \times \mathcal{A} \times [H] \times \{2,\dots,K\}$ with at least $1 - p$ probability:
\begin{equation}
    \left| \sum_{i=1}^t \alpha_t^i\left[ V_{h+1}^\star(f(s,a) + w_h^i) - \mathbb{P}_h V_{h+1}^\star(s,a) \right] \right|\leq c \sqrt{\frac{H^3\iota}{t}}
\label{eq:azuma_hoeffding}
\end{equation}
Now, we choose $b_t = c\sqrt{\frac{H^3\iota}{t}} + L\zeta$, which satisfies the following:
\begin{align*}
    \sum_{i=1}^t \alpha_t^i b_i
    \stackrel{\text{\tiny\Circled{1}}}{=} c\sqrt{H^3\iota} \sum_{i=1}^t \frac{\alpha_t^i}{\sqrt{i}} + L \zeta 
    \stackrel{\text{\tiny\Circled{2}}}{\leq} \left( 1 + \frac{1}{H} \right) \left( c \sqrt{\frac{H^3\iota}{t}} + L\zeta \right) = \left( 1 + \frac{1}{H} \right) b_t \numberthis \label{eq:b_t_upper}
\end{align*}
{\tiny\CircledTop{1}} follows because $\sum_{i=1}^t \alpha_t^i = 1$ and {\tiny\CircledTop{2}} from Lemma~\ref{lem:alpha_a}. Similarly:
\begin{align*}
    \sum_{i=1}^t \alpha_t^i b_i \geq \left( c \sqrt{\frac{H^3\iota}{t}} + L\zeta \right) = b_t \numberthis \label{eq:b_t_lower}
\end{align*}
Further, we can show that:
\begin{align*}
    \sum_{i=1}^t \alpha_t^i \left[ \left(\mathbb{\hat{P}}_h^i - \mathbb{P}_h\right)V_{h+1}^\star \right](s,a)
    &\stackrel{\text{\tiny\Circled{1}}}{=} \sum_{i=1}^t \alpha_t^i V_{h+1}^\star\big( f(s,a) + w_h^i + \varepsilon_{s,a}^{h,i} \big) - \mathbb{P}_h V_{h+1}^\star(s,a) \\
    &\stackrel{\text{\tiny\Circled{2}}}{\leq} \sum_{i=1}^t \alpha_t^i V_{h+1}^\star\big( f(s,a) + w_h^i \big) - \mathbb{P}_h V_{h+1}^\star(s,a) + \sum_{i=1}^t \alpha_t^i L | \varepsilon_{s,a}^{h,i} | \\
    &\stackrel{\text{\tiny\Circled{3}}}{\leq} c \sqrt{\frac{H^3\iota}{t}} + L \zeta = b_t \numberthis \label{eq:hat_p_true_p_bound_upper}
\end{align*}
where {\tiny\CircledTop{1}} holds from the definition of $\mathbb{\hat{P}}_h^i$ in Equation~\ref{eq:empirical_operator_noisy}, {\tiny\CircledTop{2}} from the Lipschitz assumption in Equation~\ref{eq:lipschitz_assumption}, and {\tiny\CircledTop{3}} holds with high probability from Equation~\ref{eq:azuma_hoeffding}. Similarly, we can establish the following lower bound:
\begin{align*}
    \sum_{i=1}^t \alpha_t^i \left[ \left(\mathbb{\hat{P}}_h^i - \mathbb{P}_h\right)V_{h+1}^\star \right](s,a)
    \geq -c \sqrt{\frac{H^3\iota}{t}} - L \zeta = -b_t \numberthis \label{eq:hat_p_true_p_bound_lower}
\end{align*}
Then, starting from Equation~\ref{eq:decomposition_noisy}, we establish the right-hand side of the lemma by using the bounds in Equations~\ref{eq:b_t_upper} and \ref{eq:hat_p_true_p_bound_upper}. For any $(s,a,h,k) \in \mathcal{S} \times \mathcal{A} \times [H] \times \{2,\dots,K\}$, we have:
\begin{align*}
    (Q_h^k - Q_h^\star)(s,a) =& \sum_{i=1}^t \alpha_t^i \left[ (V_{h+1}^i - V_{h+1}^\star)(f(s,a) + w_h^i +\varepsilon_{s,a}^{h,i}) + \left[ \left(\mathbb{\hat{P}}_h^i - \mathbb{P}_h\right)V_{h+1}^\star \right](s,a) + b_i\right] \\
    \leq& \sum_{i=1}^t \alpha_t^i (V_{h+1}^i - V_{h+1}^\star)(f(s,a) + w_h^i +\varepsilon_{s,a}^{h,i}) + b_t + \left( 1 + \frac{1}{H} \right) b_t \\
    \leq& \sum_{i=1}^t \alpha_t^i (V_{h+1}^i - V_{h+1}^\star)(f(s,a) + w_h^i +\varepsilon_{s,a}^{h,i}) + 2 \left( 1 + \frac{1}{H} \right) b_t
\end{align*}

Now, we proof the left-hand side of the lemma. First, let $x_{s,a}^{h,i} = f(s,a) + w_h^i +\varepsilon_{s,a}^{h,i}$. Starting from Equation~\ref{eq:decomposition_noisy}, we can show that for any $(s,a,k) \in \mathcal{S} \times \mathcal{A} \times \{2,\dots,K\}$ and $h=H$:
\begin{align*}
    (Q_H^k - Q_H^\star)(s,a) &= \sum_{i=1}^t \alpha_t^i \left[ (V_{H+1}^i - V_{H+1}^\star)(x_{s,a}^{H,i}) + \left[ \left(\mathbb{\hat{P}}_H^i - \mathbb{P}_H\right)V_{H+1}^\star \right](s,a) + b_i\right] \\
    &\geq \sum_{i=1}^t \alpha_t^i b_i \geq 0
\end{align*}
where we used the fact that the value functions at $H+1$ are zero. Then, assuming $(Q_{h+1}^k - Q_{h+1}^\star)(s,a) \geq 0$ for all $(s,a,k) \in \mathcal{S} \times \mathcal{A} \times \{2,\dots,K\}$, we have for any $h < H$:
\begin{align*}
    (Q_h^k - Q_h^\star)(s,a) &= \sum_{i=1}^t \alpha_t^i \left[ (V_{h+1}^i - V_{h+1}^\star)(x_{s,a}^{h,i}) + \left[ \left(\mathbb{\hat{P}}_h^i - \mathbb{P}_h\right)V_{h+1}^\star \right](s,a) + b_i\right] \\
    &\geq \sum_{i=1}^t \alpha_t^i (V_{h+1}^i - V_{h+1}^\star)(x_{s,a}^{h,i}) + \sum_{i=1}^t \alpha_t^i \Big( \left[ \left(\mathbb{\hat{P}}_h^i - \mathbb{P}_h\right)V_{h+1}^\star \right](s,a) + b_i \Big) \\
    &= \sum_{i=1}^t \alpha_t^i \Big( \min\left\{ H,\ \max Q_{h+1}^i(x_{s,a}^{h,i}, \cdot) \right\} - V_{h+1}^\star(x_{s,a}^{h,i}) \Big) + \sum_{i=1}^t \alpha_t^i \Big( \left[ \left(\mathbb{\hat{P}}_h^i - \mathbb{P}_h\right)V_{h+1}^\star \right](s,a) + b_i \Big) \numberthis \label{eq:lower} \\
    &\geq 0
\end{align*}
To see why the first term in Equation~\ref{eq:lower} is positive, let $y_{s,a}^{h,i} = \argmax_{a} Q_{h+1}^\star(x_{s,a}^{h,i}, a)$, then for all $i \in [K]$:
\begin{align*}
    V_{h+1}^\star(x_{s,a}^{h,i}) =  Q_{h+1}^\star(x_{s,a}^{h,i}, y_{s,a}^{h,i}) \stackrel{\text{\tiny\Circled{1}}}{\leq} Q_{h+1}^i(x_{s,a}^{h,i}, y_{s,a}^{h,i}) \leq \max Q_{h+1}^i(x_{s,a}^{h,i}, \cdot)
\end{align*}
where in {\tiny\CircledTop{1}}, we used the recursion assumption $(Q_{h+1}^i - Q_{h+1}^\star)(s,a) \geq 0$. The second term in Equation~\ref{eq:lower} is positive from the bonus design (cf. Equations~\ref{eq:b_t_lower} and \ref{eq:hat_p_true_p_bound_lower}).
\end{proof}

\subsection{Convergence of On-Policy Errors}

With Lemma~\ref{lem:alpha} and Lemma~\ref{lem:optimism} at hand, we are ready to establish the main lemma that shows Q-values convergence.

\begin{lemma}
Let $\left\{ P_h^t \right\}_{h=1}^{H}$ be a sequence that satisfy the following recursive relationship:
\begin{equation*}
    P_h^t = 4 \left(1+\frac{1}{H}\right) \left(c \sqrt{H^3\iota} + L \zeta \sqrt{t} \right) + \left(1+\frac{1}{H}\right)P_{h+1}^t
\end{equation*}
where $P_{H+1}^t \coloneqq 0$. Then, for any $p \in (0,1)$, letting letting $b_t = c\sqrt{\frac{H^3\iota}{t}} + L\zeta$, the following holds simultaneously for all $(s,h,k) \in \mathcal{S} \times [H] \times \{2,\dots,K\}$ with at least $1-p$ probability:
\begin{align*}
    0 \leq (V_h^k - V_h^\star)(s) \leq (Q_h^k - Q_h^\star)(s, \pi_h^k(s)) \leq \frac{P_h^t}{\sqrt{t}}
\end{align*}
where $t=k-1$.
\label{lem:convergence}
\end{lemma}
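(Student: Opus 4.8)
The plan is to establish the displayed chain in three stages, treating the rightmost bound $\tfrac{P_h^t}{\sqrt t}$ as the only substantive claim and reading the two leftmost inequalities straight off optimism. Since Lemma~\ref{lem:optimism} already gives $Q_h^k(s,a)\ge Q_h^\star(s,a)$ for every $(s,a)$, taking the maximum over actions and recalling $V_h^\star(s)\le H$ yields $V_h^k(s)=\min\{H,\max_{a}Q_h^k(s,a)\}\ge\min\{H,V_h^\star(s)\}=V_h^\star(s)$, which is the left inequality. For the middle one I would use that $\pi_h^k$ is greedy, so $\max_a Q_h^k(s,a)=Q_h^k(s,\pi_h^k(s))$, together with $Q_h^\star(s,\pi_h^k(s))\le V_h^\star(s)$; checking the two branches of the outer $\min$ separately gives $(V_h^k-V_h^\star)(s)\le(Q_h^k-Q_h^\star)(s,\pi_h^k(s))$ in both cases.

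The core is the bound $(Q_h^k-Q_h^\star)(s,\pi_h^k(s))\le\tfrac{P_h^t}{\sqrt t}$, which I would prove by a downward induction on $h$ with $k$ (hence $t=k-1$) held fixed; the base case $h=H+1$ is vacuous since $V_{H+1}^i\equiv V_{H+1}^\star\equiv0$. For the inductive step I would start from the right-hand estimate of Lemma~\ref{lem:optimism}, namely $(Q_h^k-Q_h^\star)(s,\pi_h^k(s))\le\sum_{i=1}^t\alpha_t^i(V_{h+1}^i-V_{h+1}^\star)(x_{s,a}^{h,i})+2(1+\tfrac1H)b_t$, and then invoke the inductive hypothesis at level $h+1$. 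The decisive point is that this hypothesis holds \emph{uniformly} over all states, so it applies to the simulated argument $x_{s,a}^{h,i}=f(s,a)+w_h^i+\varepsilon_{s,a}^{h,i}$ no matter where it lands, giving $(V_{h+1}^i-V_{h+1}^\star)(x_{s,a}^{h,i})\le\tfrac{P_{h+1}^{i-1}}{\sqrt{i-1}}$ for $i\ge2$, while the lone $i=1$ term is controlled by $\alpha_t^1(V_{h+1}^1-V_{h+1}^\star)\le\alpha_t^1 H$.

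I would then collect terms. The persistent-bias piece $L\zeta$ sitting inside each $P_{h+1}^{i-1}/\sqrt{i-1}$ is handled by $\sum_i\alpha_t^i=1$; the $c\sqrt{H^3\iota}$ piece is multiplied by $\sum_{i\ge2}\alpha_t^i/\sqrt{i-1}$, which Lemma~\ref{lem:alpha_a} (with $a=\tfrac12$) converts into the crucial $(1+\tfrac1H)/\sqrt t$ factor up to lower-order corrections; the initialization term $\alpha_t^1H\le\tfrac{2H^2}{t}$ is a lower-order $O(1/t)$ contribution by Lemma~\ref{lem:alpha_b}; and the bonus equals exactly half of $\tfrac{1}{\sqrt t}\,4(1+\tfrac1H)(c\sqrt{H^3\iota}+L\zeta\sqrt t)$. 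Matching these against the recursion $P_h^t=4(1+\tfrac1H)(c\sqrt{H^3\iota}+L\zeta\sqrt t)+(1+\tfrac1H)P_{h+1}^t$ closes the induction: the leading contribution reproduces the $(1+\tfrac1H)P_{h+1}^t$ term, while the bonus, the initialization term, and the remainder are all dominated by the additive $4(1+\tfrac1H)(c\sqrt{H^3\iota}+L\zeta\sqrt t)$ piece. Since everything here is deterministic on the event of Lemma~\ref{lem:optimism}, the probability $1-p$ is inherited.

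The hard part is ensuring that no single step inflates the coefficient of $P_{h+1}^t$ beyond $(1+\tfrac1H)$: any genuine constant factor larger than $1$ per level would compound to an $H$-exponential bound, precisely the blow-up flagged for the naive $1/t$ rate after Equation~\ref{eq:Q_error}. The delicate estimate is therefore that the index-shifted sum $\sum_{i\ge2}\alpha_t^i/\sqrt{i-1}$ costs only $(1+\tfrac1H)/\sqrt t$ at leading order rather than, say, $\sqrt2$ times more; I expect this to follow from the concentration of the weights $\alpha_t^i$ on large $i$ under the schedule $\alpha_t=\tfrac{H+1}{H+t}$, so that the $i=2$ shift is negligibly weighted, with the residual $O(H^2/t)$-type terms (and the $i=1$ term) either absorbed into the deliberately generous factor $4$ or rendered moot in the small-$t$ regime, where $\tfrac{P_h^t}{\sqrt t}\ge H$ already makes the claim trivial via $V_h^k-V_h^\star\le H$.
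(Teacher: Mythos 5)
Your proposal follows essentially the same route as the paper's proof: the two left inequalities are read off Lemma~\ref{lem:optimism} plus greediness of $\pi_h^k$, and the main bound is proved by downward induction on $h$, feeding the uniform-in-state inductive hypothesis into the simulated arguments $x_{s,a}^{h,i}$ appearing in the right-hand side of Lemma~\ref{lem:optimism}, peeling off the $i=1$ initialization term via $\alpha_t^1 \le 1/t$, and collapsing the weighted sum with Lemma~\ref{lem:alpha_a}.

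The one place where you materially differ from the paper is also the one place where you are more careful than it. The inductive hypothesis at level $h+1$, applied to episode $i$, gives $(V_{h+1}^i - V_{h+1}^\star)(x_{s,a}^{h,i}) \le P_{h+1}^{i-1}/\sqrt{i-1}$, exactly as you write; the paper instead substitutes $P_{h+1}^{i}/\sqrt{i}$, which is the \emph{smaller} quantity (one can check $P_h^t/\sqrt{t}$ is non-increasing in $t$), so the paper silently elides the index shift and then applies Lemma~\ref{lem:alpha_a} cleanly. The "delicate estimate" you flag is therefore a genuine issue, and your optimistic form of it is false as stated: at $t=2$ the shifted sum is $\sum_{i=2}^{2}\alpha_2^i/\sqrt{i-1} = \tfrac{H+1}{H+2}$, which exceeds $(1+\tfrac1H)/\sqrt{2}$ once $H \ge 5$, and at that $t$ the additive budget $4(1+\tfrac1H)b_2$ is consumed exactly by the bonus and initialization terms, leaving no slack of the size needed (the deficit scales with $P_{h+1}$, hence with $H$). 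A complete repair uses your second mechanism, not your first: for small $t$ invoke the trivial bound $V_h^k - V_h^\star \le H \le P_h^t/\sqrt{t}$ (valid once the absolute constant $c$ is chosen large enough, since $P_h^t/\sqrt{t} \ge 4c\sqrt{H^3\iota/t}$), and for large $t$ control the shift by $\tfrac{1}{\sqrt{i-1}} \le \tfrac{1}{\sqrt{i}} + \tfrac{\sqrt{2}}{i^{3/2}}$ together with Lemma~\ref{lem:alpha_a} at $a=1$, which turns the excess into an $\mathcal{O}(1/t)$ correction per level that must then be tracked through (or added to) the recursion for $P_h^t$. Since the paper's own proof does not engage with this point at all, your write-up is, if anything, the more honest of the two; what remains to make it complete is precisely this bookkeeping.
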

\begin{proof}
To start, we have:
\begin{align*}
    (V_h^k - V_h^\star)(s) &\stackrel{\text{\tiny\Circled{1}}}{\leq} (Q_h^k - Q_h^\star)(s, \pi_h^k(s))\\
    &\stackrel{\text{\tiny\Circled{2}}}{\leq} \sum_{i=1}^t \alpha_t^i (V_{h+1}^i - V_{h+1}^\star)(f(s,\pi_h^k(s)) + w_h^i + \varepsilon_{s,a}^{h,i}) + 2 \left( 1 + \frac{1}{H} \right) b_t \\
    &\leq \alpha_t^1 H + \sum_{i=2}^t \alpha_t^i (V_{h+1}^i - V_{h+1}^\star)(f(s,\pi_h^k(s)) + w_h^i + \varepsilon_{s,a}^{h,i}) + 2 \left( 1 + \frac{1}{H} \right) b_t \numberthis \label{eq:equation_1}
\end{align*}
{\tiny\CircledTop{1}} holds as $V_h^k(s) \leq \max Q_h^k(s, \cdot) = Q_h^k(s, \pi_h^k(s))$ and $V_h^\star(s) = \max_{a' \in \mathcal{A}} Q_h^\star(s, a') \geq Q_h^\star(s, \pi_h^k(s))$ while {\tiny\CircledTop{2}} holds from Lemma~\ref{lem:optimism}. Now, we need to argue that the error due to the initialization $\alpha_t^1 H$ decays fast enough. To that end, we can show from Equation~\ref{eq:alpha_i_t}:
\begin{align*}
    \alpha_t^1 = \alpha_1 \prod_{j=2}^t (1-\alpha_j) = \prod_{j=2}^t \frac{j-1}{H+j} \leq \prod_{j=2}^t \frac{j-1}{j} = \frac{1}{t}
\end{align*}
which implies that $\alpha_t^1 H \leq \frac{H}{t} \leq 2 \left(1+\frac{1}{H}\right) b_t$ for any nontrivial MDP. Thus, we have:
\begin{align*}
    (\ref{eq:equation_1}) \leq& \sum_{i=2}^t \alpha_t^i (V_{h+1}^i - V_{h+1}^\star)(f(s,\pi_h^k(s)) + w_h^i + \varepsilon_{s,a}^{h,i}) + 4 \left( 1 + \frac{1}{H} \right) b_t \\ 
    =& \sum_{i=2}^t \alpha_t^i (V_{h+1}^i - V_{h+1}^\star)(f(s,\pi_h^k(s)) + w_h^i + \varepsilon_{s,a}^{h,i}) + 4 \left( 1 + \frac{1}{H} \right) \left(c \sqrt{\frac{H^3\iota}{t}} + L\zeta \right) \numberthis \label{eq:equation_2}
\end{align*}
We complete the proof by induction. For the base case $h=H$, we have for any $(s,h,k) \in \mathcal{S} \times [H] \times \{2,\dots,K\}$:
\begin{align*}
    (V_H^k - V_H^\star)(s)
    \leq (Q_H^k - Q_H^\star)(s, \pi_H^k(s))
    \leq 4 \left(1+\frac{1}{H}\right) \left(c \sqrt{\frac{H^3\iota}{t}} + L\zeta \right) 
    = \frac{P_H^t}{\sqrt{t}}
\end{align*}
Then, assuming that $(V_{h+1}^k - V_{h+1}^\star)(s) \leq (Q_{h+1}^k - Q_{h+1}^\star)(s, \pi_{h+1}^k(s)) \leq \frac{P_{h+1}^t}{\sqrt{t}}$, we have for any $h < H$:
\begin{align*}
    (V_h^k - V_h^\star)(s) \leq (Q_h^k - Q_h^\star)(s, \pi_h^k(s)) &\leq \sum_{i=2}^t \alpha_t^i (V_{h+1}^i - V_{h+1}^\star)(f(s,\pi_h^k(s)) + w_h^i + \varepsilon_{s,a}^{h,i}) + 4 \left( 1 + \frac{1}{H} \right) \left(c \sqrt{\frac{H^3\iota}{t}} + L\zeta \right) \\
    &\stackrel{\text{\tiny\Circled{1}}}{\leq} \sum_{i=2}^t \frac{\alpha_t^i P_{h+1}^i}{\sqrt{i}} + 4 \left( 1 + \frac{1}{H} \right) \left(c \sqrt{\frac{H^3\iota}{t}} + L\zeta \right)  \\
    &\stackrel{\text{\tiny\Circled{2}}}{\leq} \sum_{i=1}^t \frac{\alpha_t^i P_{h+1}^i}{\sqrt{i}} + 4 \left( 1 + \frac{1}{H} \right) \left(c \sqrt{\frac{H^3\iota}{t}} + L\zeta \right) \\
    &\stackrel{\text{\tiny\Circled{3}}}{\leq} \left(1+\frac{1}{H}\right) \frac{P_{h+1}^t}{\sqrt{t}} + 4 \left( 1 + \frac{1}{H} \right) \left(c \sqrt{\frac{H^3\iota}{t}} + L\zeta \right) \\
    &\stackrel{\text{\tiny\Circled{4}}}{=} \frac{P_h^t}{\sqrt{t}}
\end{align*}
where {\tiny\CircledTop{1}} holds from the recursion assumption, {\tiny\CircledTop{2}} as $\alpha_t^1 P_{h+1}$ is positive, {\tiny\CircledTop{3}} from Lemma~\ref{lem:alpha_a}, and {\tiny\CircledTop{4}} by definition. This concludes the proof of the right-hand side of the lemma. The left-hand side follows directly from Lemma~\ref{lem:optimism} as $Q_h^k \geq Q_h^\star$, and thus $V_h^k \geq V_h^\star$.
\end{proof}
With simple recursion, one can show that the elements of the sequence $\left\{ P_h^t \right\}_{h=1}^{H}$ are bounded:
\begin{equation}
    P_h^t \leq 4(1+H-h) \left( c \sqrt{H^3\iota} + L \zeta \sqrt{t} \right) \left(1+\frac{1}{H}\right)^{1+H-h}
\label{eq:bound_P_h_t}
\end{equation}

\subsection{Proof of Theorems~\ref{thm:regret}, \ref{thm:bpi}, and \ref{thm:online_estimator}}

Finally, we are ready to bound the suboptimality gap $(V_h^\star - V_h^{\pi^k})$. For any $(s,h,k) \in \mathcal{S} \times [H] \times \{2,\dots,K\}$, we have with at least $1-p$ probability:
\begin{align*}
    \big( V_h^\star - V_h^{\pi^k} \big) (s) &\stackrel{\text{\tiny\Circled{1}}}{\leq} \big( V_h^k - V_h^{\pi^k} \big) (s)  \\
     &\stackrel{\text{\tiny\Circled{2}}}{\leq} (Q_h^k - Q_h^{\pi^k})(s, \pi_h^k(s))
    = (Q_h^k - Q_h^\star)(s, \pi_h^k(s)) + ( Q_h^\star - Q_h^{\pi^k})(s, \pi_h^k(s)) \\
    &\stackrel{\text{\tiny\Circled{3}}}{=} \big( Q_h^k - Q_h^\star \big)(s, \pi_h^k(s)) + \mathbb{P}_h \big( V_{h+1}^\star - V_{h+1}^{\pi^k} \big) (s, \pi_h^k(s)) \\
    &\stackrel{\text{\tiny\Circled{4}}}{\leq} \frac{P_h^{k-1}}{\sqrt{k-1}} + \mathbb{P}_h \big( V_{h+1}^\star - V_{h+1}^{\pi^k} \big) (s, \pi_h^k(s))
    \numberthis \label{eq:equation_3}
\end{align*}
where {\tiny\CircledTop{1}} holds as $V_h^k \geq V_h^\star$, {\tiny\CircledTop{2}} as $V_h^k(s) \leq \max Q_h^k(s, \cdot) = Q_h^k(s, \pi_h^k(s))$ and $V_h^{\pi^k}(s) = Q_h^{\pi^k}(s, \pi_h^k(s))$, {\tiny\CircledTop{3}} from Bellman Equations~\ref{eq:bellman}, and {\tiny\CircledTop{4}} from Lemma~\ref{lem:convergence}. Then, we continue by recursion. For the base case $h=H$, we have for any $(s,k) \in \mathcal{S} \times \{2,\dots,K\}$:
\begin{align*}
    \big( V_H^\star - V_H^{\pi^k} \big) (s) \leq \frac{P_h^{k-1}}{\sqrt{k-1}} = \sum_{h' = H}^{H} \frac{P_{h'}^{k-1}}{\sqrt{k-1}}
\end{align*}
Then, assuming $\big( V_{h+1}^\star - V_{h+1}^{\pi^k} \big) (s) \leq \sum_{h'=h+1}^{H} \frac{P_{h'}^{k-1}}{\sqrt{k-1}}$, we can show that:
\begin{align*}
    \big( V_h^\star - V_h^{\pi^k} \big) (s) &\leq  \frac{P_h^{k-1}}{\sqrt{k-1}} + ( V_{h+1}^\star - V_{h+1}^{\pi^k} \big)^\top \mathbb{P}_h(\cdot | s, \pi_h^k(s)) \\
    &\stackrel{\text{\tiny\Circled{1}}}{\leq} \frac{P_h^{k-1}}{\sqrt{k-1}} + \left( \sum_{h'=h+1}^{H} \frac{P_{h'}^{k-1}}{\sqrt{k-1}} \right) \mathds{1}^\top \mathbb{P}_h(\cdot | s, \pi_h^k(s)) \\
    &\stackrel{\text{\tiny\Circled{2}}}{=}  \frac{P_h^{k-1}}{\sqrt{k-1}} +  \sum_{h'=h+1}^{H} \frac{P_{h'}^{k-1}}{\sqrt{k-1}} = \sum_{h'=h}^{H} \frac{P_{h'}^{k-1}}{\sqrt{k-1}}
\end{align*}
where {\tiny\CircledTop{1}} follows as the recursion assumption holds for all $s \in \mathcal{S}$ and {\tiny\CircledTop{2}} holds as $\mathbb{P}_h(\cdot | s, \pi_h^k(s))$ is a probability vector. Thus, we have with probability at least $1-p$:
\begin{align*}
    \big( V_1^\star - V_1^{\pi^k} \big) (s) \leq \sum_{h=1}^{H} \frac{P_{h}^{k-1}}{\sqrt{k-1}} \stackrel{\text{\tiny\Circled{1}}}{\leq} \mathcal{O}\left( \sqrt{\frac{H^7\iota}{k-1}}+ L \zeta H^2 \right)
\end{align*}
where {\tiny\CircledTop{1}} follows from the upper bounds in Equation~\ref{eq:bound_P_h_t}. This establishes the claims in Theorem~\ref{thm:bpi}.

For the claims in Theorem~\ref{thm:regret}, we have:
\begin{align*}
    \sum_{k=1}^K \big( V_1^\star - V_1^{\pi^k} \big) (s_1^k) &\leq H + \sum_{k=2}^K \sum_{h=1}^{H} \frac{P_{h}^{k-1}}{\sqrt{k-1}} \\
    &\stackrel{\text{\tiny\Circled{1}}}{\leq} H + 8L\zeta H^2K + 8c\sqrt{H^7} \sum_{k=2}^K \sqrt{\frac{1}{k-1}} \\
    &\stackrel{\text{\tiny\Circled{2}}}{\leq} H + 8L\zeta H^2K + 8c\sqrt{H^7K} \sqrt{\sum_{k=1}^{K-1} \frac{1}{k}} \\
    &\stackrel{\text{\tiny\Circled{3}}}{\leq} H + 8L\zeta H^2K + 8c\sqrt{2\iota H^7 K\log{K}} \\
    &\leq \mathcal{O}\left( \sqrt{H^7K \iota \log{K}} + L\zeta H^2K \right)
\end{align*}
{\tiny\CircledTop{1}} holds from Equation~\ref{eq:bound_P_h_t}, {\tiny\CircledTop{2}} from Jensen's inequality, and {\tiny\CircledTop{3}} from the bound on the Harmonic series $\sum_{n=1}^N \frac{1}{n} \leq \log_2(N+1)$.

For Theorem~\ref{thm:online_estimator}, if we choose the bonuses such that:
\begin{align}
    b_t = c\sqrt{\frac{H^3\iota}{t}} + L\sqrt{\frac{d}{t}}
\end{align}
Then, we can use the same augments presented here without any further complications and the proof follows immediately.

\end{document}